\newtheorem{definition}{Definition}
\newtheorem{theorem}{Theorem}
\newtheorem{proposition}{Proposition}
\newcommand{\Eb}{\mathbb{E}}
\newcommand{\Rb}{\mathbb{R}}
\newcommand{\Ac}{\mathcal{A}}
\newcommand{\Fc}{\mathcal{F}}
\newcommand{\Jc}{\mathcal{J}}
\newcommand{\Kc}{\mathcal{K}}
\newcommand{\Lc}{\mathcal{L}}
\newcommand{\Pc}{\mathcal{P}}
\newcommand{\Sc}{\mathcal{S}}
\newcommand{\Yc}{\mathcal{Y}}
\newcommand{\sys}{\text{sys}}
\newcommand{\kc}{\kappa}
\newcommand{\one}{\mathbb{I}}
\newcommand{\onen}{\mathbf{1}}
\newenvironment{tightitemize}{%
    \list{{\textup{$\bullet$}}}{\settowidth\labelwidth{{\textup{\qquad}}}
    \leftmargin\labelwidth \advance\leftmargin\labelsep
    \parsep 0pt plus 1pt minus 1pt \topsep 3pt \itemsep 3pt
    }}{\endlist}
\title{Robust and Fair Multi-class Classification via Systemic Risk}
\author{Darinka Dentcheva \\
        Department of Mathematical sciences \\
        Stevens Institute of Technology \\
        Hoboken, NJ 07030, USA \\
        \texttt{darinka.dentcheva@stevens.edu}\\
        \And 
        Xiangyu Tian \\
        Department of Mathematical sciences \\
        Stevens Institute of Technology \\
        Hoboken, NJ 07030, USA \\
        \texttt{xtian9@stevens.edu}
        }
\date{}
\begin{document}

\maketitle

\begin{abstract}
We develop a new multi-class classification framework based on the theory of coherent risk measures and systemic risk. The proposed approach is suitable for problems when the data is noisy, corrupted, or scarce relative to the dimension of the problem. The paper provides the foundation of models using systemic risk and their application in the context of linear and kernel-based multi-class problems. A system-theoretic approach with non-linear aggregation of contextual risk is proposed, which leads to a two-stage stochastic programming problem. A risk-averse regularized decomposition method is designed to solve the problem. The computational effort grows linearly with the number of points in the data set. We use a popular multi-class method to construct a risk-averse counterpart methods and to serves as a benchmark of the performance of the new methods. 
We demonstrate that the application of the proposed framework  provides robustness with respect to changes in the probability distributions involved arising from noisy or corrupted data, mislabeling, or other. Furthermore, we show how to use the systemic measures of risk to enforce equal opportunity fairness at the same time. Analysis and experiments regarding the fairness of the proposed models are carefully conducted and benchmarked to recent methods designed to address robustness and fairness simultaneously. Our experiments demonstrate that the proposed risk-averse classification methods are robust in the presence of unreliable training data and perform better on unknown data than the methods minimizing an expected loss function of classification errors. Furthermore, the performance improves when the number of classes increases.
\end{abstract}

\keywords{
risk sharing, coherent risk measures, fairness, stochastic programming, risk-averse multi-cut regularization
}

\section{Introduction}

Classification is one of the most important machine learning methods that can be used in many real-life situations, such as fraud detection, health anomalies, natural disaster detection, etc. Our paper presents a new risk-averse design of the classification model. 

Robustness in classification design has become an important area of study due to the increasingly complex data being analyzed in applications where misclassification may have very serious consequences. The objective of robust classification is to develop models that are stable and accurate even when data is incomplete, noisy, or with polluted labels. In adversarial situations, mislabeling typically occurs. 

Various approaches to robustness with respect to polluted data exist in the literature and quite a few papers deal with this issue.  Providing an exhaustive survey is beyond the scope of this paper; we only mention several approaches and some associated works. Methods investigated by a number of researchers employ robust optimization with various constructions of uncertainty sets; we refer to \cite{ghaoui1997robust,bertsimas2018robust,bi2005support}. This method does not only work with noisy features, but also works when the labels are polluted. Another approach is based on chance constraints, e.g., \cite{lanckriet2002robust,bental2011chance}. 
Preprocessing the data is also a common option. Such as the work in \cite{northcutt2017learning}, it has proposed a method to deal with the noisy labels in binary classification problems. The method can take a mislabeled training set, and find a subset of it that is more likely to be correctly labeled. In \cite{patrini2017making}, the robustness of the classification model against the noisy labels in deep neural networks is investigated. Their method is compatible with multi-class setting. The preprocessing methods have the potential to be combined with other methods that focus on the model formulation.

Other work addresses robustness to corrupted data by tailoring the risk estimation and the loss function (\cite{barron2019general,ghosh2017robust,sudre2017generalised}). In \cite{lecue2020robust}, the traditional empirical risk, which is the estimator of the expectation of the loss function, is replaced by the mean of the medium value estimator. The authors argue that the minimizer of that estimator is less sensitive to the corruption of the training data than the empirical risk minimizer, and hence, it is more robust. The work in \cite{sypherd2022tunable} proposed a 
tunable $\alpha$-loss as an effective generalization of the cross entropy loss, which can be either more robust or more sensitive depending on the actual application. 

Other influential work regarding robust classification design changes the framework of the learning process, see \cite{freund1997decision,breiman2001random,friedman2001greedy}. These approaches are typically based on combinations of methods. For example, AdaBoost \cite{freund1997decision} iteratively reweights the training data, giving more weight to the instances that were misclassified in previous rounds, which can be interpreted as data preprocessing. Additionally, a form of adaptive loss function is employed, where more emphasis is put on the samples that are harder to classify. 

In our paper, we propose a new way to estimate the misclasification risk which is particularly suitable for multi-class scenarios. Instead of the empirical risk, we use coherent risk measures for \emph{systemic} risk. 
While coherent risk measures are widely used in finance and insurance, the theory and numerical methods for systemic measures of risk are less developed. As risk is not additive, the systemic measures of risk aim at risk aggregation and, respectively, risk allocation to the units of a system in a coherent way.

 Coherent measures of risk have been used in classification as well as in other contexts of statistical learning because they have a built-in robustness while being amenable to efficient numerical treatment. In the context of classification, coherent measures of risk have been used mainly in a binary classification scenario in \cite{vitt2019risk, gotoh2017support,norton2017soft}. In \cite{gotoh2017support}, the authors use an overall risk function to measure the risk of the entire system while in \cite{vitt2019risk}, the authors proposed a new family of loss functions, which apply different risk measures on different classes. They argue that in different tasks, the importance of each class should be different, therefore treating them differently can lead to a more flexible and more adequate model. 

 The new framework for a multi-class classification that is proposed in this paper has build-in robustness corresponding to distributionally robust methods with implicitly defined ambiguity sets. This results in a smaller computational burden while still providing robustness to polluted data, mislabeling, and general perturbations of the distributions involved. Furthermore, the proposed approach has the potential to properly evaluate and enforce fairness at the same time. 
The extant literature  points to a trade-off between fairness and model performance (e.g., \cite{rychener2022metrizing}). We note however that compared to the baselines, our method sacrifices less performance to achieve better fairness. We shall make the relevant notions precise in due course.
 While our framework is generally aligned with the ideas of risk-sharing in classification models similar to \cite{vitt2019risk}, our proposal is more general and unifies the approaches to risk-sharing as it is based on an axiomatic foundation (\cite{almen2024risk}). Additionally, we have devised a new decomposition method for solving the resulting optimization problem and have shown its convergence. We have demonstrated that the numerical effort 
 required by the proposed method is significantly smaller than the effort needed by a standard optimization solver. 

We demonstrate the proposed methodology on a risk-averse counterpart of a popular multi-class classification method. In the comparison, we used the empirical estimators of the risk measures and we experimented with corrupted data. The risk-averse model shows statistically significant better performance and better generalization towards the unknown. Furthermore, the risk-averse design is more helpful in diverse situation involving more complex multi-class setting. The numerical results show that the performance advantage of the proposed risk-averse model over the risk-neutral model becomes more pronounced when more classes are involved. 
Further, we demonstrate that equal opportunity fairness can be addressed with negligible decrease of accuracy and with preservation of robustness at the same time with no additional computational burden.

\section{Coherent Measures of Risk for Random Vectors}

The risk of a loss function can be evaluated using a univariate coherent measure of risk according to the widely accepted axiomatic framework that is proposed in \cite{artzner1999coherent} and further analyzed in \cite{Delbaen,Follmer,RS:2005,RuSh:2006a,PflRom:07,DDARriskbook}, and many others work.
We refer to \cite{DDARriskbook} for an extensive treatment of risk measures and stochastic optimization with such measures. 

    Let $\Lc_p(\Omega, \Fc, P;\Rb^N)$ be the space of random vectors with realizations in $\Rb^N$, defined on the probability space $(\Omega,\Fc, P)$, which have finite $p$-th moments, $p \in [1, \infty)$, and are indistinguishable on events with zero probability. We shall assume that the random variables represent losses (i.e., small outcomes are preferred) since we plan to identify them with classification errors. 
    
    When we deal with scalar-valued random variables, then a lower semi-continuous functional $\varrho: \Lc_p(\Omega, \Fc, P; \Rb) \to \Rb\cup\{+\infty\}$ is called a \textit{coherent risk measure} if it is convex, positively homogeneous, monotonic with respect to the a.s. comparison of random variables, and satisfies the following translation property:
    \[
        \varrho[Z + a] = \varrho[Z] + a \text{ for all } Z \in \Lc_p(\Omega, \Fc, P),\; a \in \Rb. 
    \] 
    While the theory and methods for risk measures of scalar-valued random variables are well-developed, less work is associated with high dimensional risks, that is, measures of risk for random vectors. The need for special attention to random vectors arises in part from the challenge that univariate coherent (or convex) measures of risk are not additive except for the expected value functional.

    We denote the $N$-dimensional vector, whose components are all equal to one by $\onen$, and the random vector with realizations equal to $\onen$ by $\one$. We adopt the following definition, introduced in \cite{almen2024risk}, see also \cite{DDARriskbook}: 
    \begin{definition}
    \label{d:riskonvectors}
        A lower semi-continuous functional $\varrho: \Lc_p(\Omega, \Fc, P;\Rb^N) \to \Rb\cup\{+\infty\}$ is a systemic coherent risk measure with preference to small outcomes if it has the following properties:
        \begin{tightitemize}
            \item[A1.] \emph{Convexity}: For all $X, Y \in \Lc_p(\Omega, \Fc, P;\Rb^N)$ and for all $ \alpha \in (0,1)$, 
                $ ~ \varrho[\alpha X + (1-\alpha)Y] \leq \alpha \varrho[X] + (1 - \alpha)\varrho[Y]. $
            \item[A2.] \emph{Monotonicity}: For all $X, Y \in \Lc_p(\Omega, \Fc, P;\Rb^N)$, if $X_i \geq Y_i$ $P$-a.s. for all components $i = 1, \dots, N$, then $\varrho[X] \geq \varrho[Y]$.
            \item[A3.] \emph{Positive homogeneity}: For all $X \in \Lc_p(\Omega, \Fc, P;\Rb^N)$ and $t > 0$, we have $\varrho[tX] = t\varrho[X]$.
            \item[A4.] \emph{Translation}: For all $X \in \Lc_p(\Omega, \Fc, P;\Rb^N)$ and for all $a \in \Rb$,  $\varrho[X + a\one] = \varrho[X] + a\varrho[\one].$
        \end{tightitemize}
    \end{definition}
    The concept of coherent risk measures is also commonly used in the field of finance, where the definition is slightly different in the way that the random quantity of interest is usually a payoff instead of a loss, i.e., large outcomes are preferred. In our paper, the random quantity would be the classification errors. It is shown in \cite{almen2024risk,DDARriskbook} that if the systemic risk measure $\varrho$ is proper, lower semicontinuous, and satisfies those axioms, then it can be represented as follows:
    \begin{equation}
        \varrho[X] = \sup_{\zeta \in \Ac_\varrho} \langle \zeta, X \rangle_{\Lc} = \sup_{\zeta \in \Ac_\varrho} \int_\Omega \zeta(\omega)^\top X(\omega)\, P(d\omega), 
        \label{general dual}
    \end{equation}
    where $\langle \cdot,\, \cdot \rangle_{\Lc}$ denotes the dual pairing between $\Lc_p(\Omega, \Fc, P;\Rb^N)$  and $\Lc_q(\Omega, \Fc, P;\Rb^N)$ with $\frac{1}{p} + \frac{1}{q} =1$ and for $p=1$, $q=\infty$.
    The set $\Ac_\varrho\subset \Lc_q(\Omega, \Fc, P;\Rb^N)$ is the convex subdifferential $\partial \varrho(0)$ of the risk measure and it satisfies:
    \[
        \Ac_\varrho (P) \subseteq \Big\{ \zeta \in \Lc_q(\Omega, \Fc, P;\Rb^N) ~|~ \zeta \geq 0 \text{ a.s.},\,  \int_\Omega \onen^\top \zeta(\omega) dP(\omega) \big\rangle =r  \Big\},
    \]
    where $r\in\Rb$ is a constant. 

    A systemic measure of risk $\varrho$ is \emph{normalized} if $\varrho[\one] = 1$, in which case  $r =1$ for all $\zeta \in \Ac_\varrho(P)$.
    This entails that for all $\zeta \in \Ac_\varrho(P)$, $\zeta P$ can be interpreted as a probability measure on the space $\Omega \times \{ 1, 2 ,\dots, N \}$.  In the special case when $N=1$, we obtain the widely used dual representation of coherent measures of risk for scalar-valued random variables 
    \begin{equation}
        \varrho[X] = \sup_{\frac{dQ}{dP} \in \Ac_\varrho(P)} \Eb_Q [ X ],
        \label{eq:scalar dual}
    \end{equation}
    where $\frac{dQ}{dP}$ is the Radon-Nikodym derivative of the measure $Q$ with respect to the reference measure $P$. The dual representation \eqref{general dual} demonstrates the link between minimizing a coherent measure $\varrho(X)$ and distributionally robust optimization (DRO).  The DRO problem with an ambiguity set $\Ac \subset\Pc (\Omega)$ and with a loss function $L(X(\vartheta, \omega))$, where $\vartheta$ is the decision vector, is formulated as follows:
    \begin{equation}
    \min_{\vartheta}\ \sup_{Q\in\Ac}\ \Eb_Q \big[L (X(\vartheta,\omega))\big].
    \label{eq:DRO}
    \end{equation}
    Problem \eqref{eq:DRO} corresponds to the following risk-averse model:
    \begin{equation*}
    \min_{\vartheta}\varrho\big[L (X(\vartheta,\omega))\big] = \min_{\vartheta} \sup_{\frac{dQ}{dP} \in\Ac_\varrho(P)}\ \Eb_Q \big[L (X(\vartheta,\omega))\big].
    \end{equation*}
    As an example, consider the widely used coherent risk measure for scalar-valued random variables called mean - upper semideviation; it is defined as follows
    \[
    \varrho(X) = \Eb[X]+\kappa\big\|(X-\mathbb{E}[X])_{+}\big\|_p,\; \text{ where  } \kappa\in[0,1].
    \]
    Here $\|\cdot\|_p$ stands for the $p$-norm in $\Lc_p(\Omega, \Fc, P;\Rb)$  and $(a)_+ = \max(0,a)$. 
    The mean - upper semideviation has the dual representation \eqref{eq:scalar dual} with the dual set 
    \begin{equation}
    \label{e:dualset-mean-sd}
    \Ac_\varrho(P) =\left\{Q\ll P:\;\tfrac{dQ}{dP}=\one+\zeta-\mathbb{E}[\zeta]\one,\; \|\zeta\|_q \leq \kappa,\; \zeta \geq 0\right\}, 
    \end{equation}
    i.e, $Q$ are probability measures with densities $\xi$ w.r.to $P$ given by $\xi = \one+\zeta-\mathbb{E}[\zeta]\one.$
    Notice that the parameter $\kappa$ controls the risk-averse level of the objective; larger value of $\kappa$ allows larger weight-differences in $\zeta$, leading to a larger ambiguity set. For fruther examples, we refer to \cite{DDARriskbook}.

    \section{The risk-averse classification problem}

   We consider labeled data consisting of $N$ subsets $\Sc_1,\Sc_2,\dots,\Sc_N$ of $n$-dimensional vectors $x\in\Rb^n$ and denote $\Jc = \{1,\dots, N\}$.
 The labels of the data points in $\Sc_i$ are denoted by $y_i$, and $y_i=i$ stands for the $i$-th class. The size of each class is the cardinality $|\Sc_i|=m_i$, $i\in\Jc$. The data points in the subset $\Sc_i$ are the observations of the $n$ features of the $i$-th class. We identify the $N$-dimensional random vector $Z$ with the classification error (measured in some way) associated with a given classifier, that is, $Z^{i}$ represents the classification error of class $i$, $i\in\Jc$; $Z^i$ has $m_i$ realizations: one for each data point in the class $\Sc_i$.   

As already mentioned, the risk measures are non-additive and the overall risk of the system does not equal to the direct sum of the risk of each class. In many applications, the system's risk is computed by linear aggregation of the risk of individual components assigning appropriate weights to each of them:
\begin{equation*}
\varrho_{\rm s}(Z) = \sum_{i\in\Jc} c_i \varrho_i(Z^i).
\end{equation*}
The argument to do so is that a Pareto-efficient risk allocation to each class $i$ can be obtained by the use of a scalarization of the vector of risks. Whenever $c_i\geq 0$ and $\sum_{i=1}^{N} c_i =1$ the risk allocation vector $\mathcal{R} = \big(\varrho_1[Z^1],\dots, \varrho_N[Z^N]\big) $ 
is obtained as a minimal element of the set of feasible risk allocations. The advantage of this method is that its calculation is very efficient. However, it is not always clear how to adjust the weight of each class especially when the number of classes increases. We propose a new way of scalarization by using a nonlinear aggregation of the risks associated with the individual classes. 

A finite probability space $(\Omega_N,\Fc_N, c)$ is given by $\Omega_N = \{1,\dots,N\}$, $c$ is a probability mass function, and $\Fc_N$ contains all subsets of $\Omega_N$. Given a collection of $N$ measures of risk $\varrho_i:\Lc_p(\Omega,\Fc, P;\Rb)\to\Rb$, $i\in\Jc$,  we associate with the random error vector $Z$, a random variable $R_Z$ on the space $\Omega_N$ as follows. The realizations of $R_Z$ are 
    \begin{equation*}
        \label{R_Z}
        R_Z(i) = \varrho_i(Z^i),\quad i\in\Jc. 
    \end{equation*}
 Let a coherent measure of risk $\varrho_{\rm o}:\Lc_\infty(\Omega_N,\Fc_N, c)\to\Rb$  be chosen as an aggregator. The measure of systemic risk 
     $\varrho_{\rm sys}:\Lc_p(\Omega, \Fc, P; \Rb^N)\to\Rb$ is defined as:
    \begin{equation}
         \varrho_{\rm sys} (Z) = \varrho_{\rm o}(R_Z). 
         \label{syst_risk2} 
    \end{equation} 
This type of measure satisfies the axioms in Definition~\ref{d:riskonvectors} as shown in \cite{almen2024risk}.
With our goals in mind, we shall pay special attention to the following classification risk evaluation. 
We choose as $\varrho_{\rm o}$ the mean-upper-semideviation risk measure of order $q$ ($q\geq 1$) and evaluate all components of the error vector $Z$ by the same law-invariant coherent measure of risk $\varrho(\cdot)$. The description of the total risk evaluation is the following: 
    \begin{equation}
    \label{d:outer-MSD-p}
  \varrho_\sys(Z)  = \sum_{i\in\Jc} c_i \varrho(Z^i) + \kappa \bigg(\sum_{i\in\Jc} c_i \Big(\varrho(Z^i) - \sum_{j\in\Jc} c_j \varrho(Z^j) \Big)_+^q\bigg)^\frac{1}{q}     
    \end{equation} 
    with $\kappa\in[0,1]$.
    This representation shows that the individual risks of the components are aggregated with an additional penalty on the deviation of the individual risks from that average risk. This property is crucial to our treatment of fairness in classification. 

Another example would be to define $\varrho_{\rm o}$ as a convex combination of the expected value and the Average Value-at-Risk at some level $\alpha$; again all components of the error vector $Z$ may be evaluated by the same law-invariant coherent measure of risk $\varrho(\cdot)$. 
Then for any $\kappa \in [0,1]$, the systemic measure of risk takes on the form:
    \begin{equation}
    \label{e:avar-comb}
        \varrho_\sys (Z)  = (1-\kappa) \sum_{i\in\Jc} c_i \varrho (Z^i) + 
                          \kappa \inf_{\eta \in \Rb} \Big\{  \eta + \frac{1}{\alpha} \sum_{i\in\Jc} c_i\big(\varrho (Z^i) - \eta\big)_+ \Big\}.
    \end{equation}
    Here, the infimum with respect to $\eta \in \Rb$ is taken over the individual risks of the components $\varrho (Z^i)$, $i\in\Jc$. Hence, this method of aggregation imposes additional penalty for the components whose risk exceeds a certain treshhold.

 We now turnm to the classification problem. Very powerful and the most widely used classification methods are the vaerious versions of support vector machines.  Most popular approaches for multi-class classification use techniques called One-vs-All and One-vs-One.
For $N$ classes, the One-vs-All method identifies $N$ separate binary classifiers such that the $i$-th classifier discriminates between the $i$-th class and all the rest of the data. After the training process, the prediction will be the class with the highest score among the $N$ classifiers. The  One-vs-One method identifies $\binom{N}{2}$ classifiers for every pair of the $N$ classes. After the training process, for a new data point, all the $\binom{N}{2}$ classifiers would be applied,  and the prediction will be the class that has been chosen most frequently by those classifiers. 
Many designs exist that do not require solving multiple optimization problems to classify multiple classes.
We consider as benchmark methods those presented in  \cite{crammer2001algorithmic, lee2004multicategory, weston1999support}, which solve only one optimization problem. 

The Crammer-Singer method is a very well-known and frequently cited classification method. The method determines $N$ linear classifiers $\psi_i:\Rb^n\to\Rb$,  $\psi_i(x) = \langle v^i, x \rangle - \gamma_i$, $i\in\Jc$, by solving the following problem:
\begin{align}
\min_{v,\gamma,Z} \quad & \sum_{i\in\Jc} \left( \frac{1}{m_i} \sum_{\ell=1}^{m_i} z^i_\ell \right) + \sigma \sum_{i\in\Jc} \|v^i\|^2  \label{eq:cs-rn-obj}\\
\text{subject to} \quad & z^i_\ell \geq  \psi_j(x^i_\ell)  - \psi_i (x^i_\ell)  + 1 \quad   i\in\Jc,\; j \in\Jc\setminus\{i\}, \,  \ell = 1, \ldots, m_i \label{eq:cs-rn-errors}\\
& Z^i \geq 0 \quad  i \in\Jc \label{eq:cs-rn-nonneg}.
\end{align}
Here $\sigma > 0$ is a small regularization parameter. In this formulation, $Z^i$ stands for the random variable with realizations $z^i_\ell$, $\ell = 1, \ldots, m_i.$ 
The problem determines $N$ classifiers such that ideally 
\begin{equation}
\label{ideal}
\psi_i(x) > \max_{j\in \Jc\; j\neq i} \psi_j(x)
\end{equation} 
for all observed data points $x$ in the $i$-th class. The data points $x$ in the training set $\Sc_i$, which violate \eqref{ideal} are identified by the positive values of the corresponding realizations of $Z^i$.
In the training problem, we can view the objective function as the minimization of the expected sum of maximal violations for all classes in a soft-margin formulation. 
The Crammer–Singer method fits to our perspective that treats the multiclass problem as a whole system.

A risk-averse version of problem \eqref{eq:cs-rn-obj}--\eqref{eq:cs-rn-nonneg} using a simple linear aggregation of the risks results in the following classification problem 
\begin{equation}
\begin{aligned}
\min_{v,\gamma,Z} \quad & \sum_{i\in\Jc} c_i \varrho_i [Z^i]+\sigma \|v\|^2\quad
\text{subject to} \quad  \eqref{eq:cs-rn-errors}, \eqref{eq:cs-rn-nonneg}.
\end{aligned}
\label{eq:cs-ra}
\end{equation}

We assume that we observe data from class $i$ with probability $c_{i}$, $i\in \Jc$. 
The probabilities $c_i$ can be given by prior knowledge of the class distribution or simply use the sample size of each class as an estimate, i.e., $c_i=\frac{m_i}{\sum_{j\in\Jc} {m_j}}$. In order to de-clutter the presentation, we introduce the shorthand notation:  
\begin{gather*}
v\in\Rb^{nN} \text{ has  components } v^i\in\Rb^n,\quad \gamma\in \Rb^N \text{ with components } \gamma_i, \\
\vartheta=(v,\gamma)\in\Rb^{N(n+1)}\;\text{ and }\; \Jc^{-i} = \Jc\setminus \{i\},\quad i\in\Jc.
\end{gather*}
It is shown in \cite{almen2024risk} that the systemic measure of risk can be a maximum of a family of coherent univariate risk measures on $\Lc_{\infty}(\Omega_N,\Fc_N, c)$ instead of a single measure $\varrho_{\rm o}$. We note that the expectation is the simplest coherent measure of risk that can be an aggregator $\varrho_{\rm o}$. Therefore, the approach with systemic risk  provides \emph{a unifying view} on risk-allocation to multiple system's components and to classification in multi-class scenario. We shall observe also that a nonlinear aggregation of classification errors based on the axiomatic framework delivers classifiers with better properties. 

From this new perspective, the classification problem can be formulated as a two-stage stochastic optimization problem. The first stage problem is:
\begin{equation}
\begin{aligned}
\min_{\vartheta} \quad & \varrho_{\rm o}[R(\vartheta)] + \sigma \left\|v\right\|^2.
\end{aligned}
\label{p:first-stage}
\end{equation}
In the first stage, we decide the best classifiers $\psi_i$, $i\in \Jc$, to calculate the optimal systemic risk and $R(\vartheta)$ is the random variable providing the total risk; the risk components for each class are calculated at the second-stage. 
We observe that the proper evaluation of the classification error requires to include the constraints $ \|v^i\|=1$ for all $ i\in\Jc$.  While it is possible to solve the resulting non-convex problem, we have kept the regularization terms  $\sigma \left\|v\right\|^2$ instead. 
In the second-stage, a coherent risk measure $\varrho_i(\cdot)$ serves as the objective of the $i$-th scenario ($i$-th class).  We calculate the risk for the $i$th class by the problem:
\begin{equation}
\label{p:second-stage}
\begin{aligned}
   R_i(\vartheta) = \min_{Z^i\geq 0} \quad & \varrho_i[Z^i]\quad
    \text{s.t.} \quad  z^i_\ell \geq \langle v^j, x^i_\ell\rangle - \langle v^i, x^i_\ell\rangle +1 \quad 
     j \in \Jc^{-i}, \;  \ell = 1, \ldots, m_i.
\end{aligned}
\end{equation}
We shall assume throughout the entire paper that all risk measures involved in the classification problem take finite value for bounded random variables. The two-stage problem \eqref{p:first-stage}--\eqref{p:second-stage} is equivalent to a large scale one-stage problem, which is formulated 
using auxiliary variables $q_i\in\Rb$, $i\in\Jc$. The formulation is the following:
\begin{align}
\min_{v,\gamma,Q,Z} \quad & \varrho_{\rm o}[Q] + \sigma \sum_{i\in\Jc} \left\|v^i\right\|^2, \label{one-stage-obj}\\
\text{s.t. }\; & 
   q_i \geq  \varrho_i(Z^i),\quad i\in\Jc, \label{one-stage-qi} \\
     & z^i_\ell \geq \psi_j(x^i_\ell) - \psi_i (x^i_\ell) +1 \quad 
     i \in\Jc, \;  \ell = 1, \ldots, m_i, \; \; j\in \Jc^{-i},  \label{one-stage-ziell}\\
    & Z^i \geq 0 \quad i\in\Jc.  \label{one-stage-Znon-neg}
\end{align}
Here $Q$ is a random variable with realizations $q_i$, $i\in\Jc$. 

When using this type of systemic risk measure we no longer need to worry about selection of weights to reflect the importance of each class unless a pertinent reason requires placing an emphases on specific classes. In the latter case, we may increase their visibility by modifying the scalarization $c$.

\section{Kernel methods for risk-averse classification}

When using kernel-based methods, we assume that a pre-Hilbert space is given, i.e., a space $\mathcal{Z}$, where the inner product is defined and a reproducing kernel 
$K:\mathcal{Z}\times\mathcal{Z}\to\Rb$ exists. More precisely, a non-linear mapping $\varphi: \Rb^n\to \mathcal{Z}$  exists such that
$K(x,x') = \langle \varphi (x),  \varphi (x')\rangle_\mathcal{Z},$
where $ \langle \cdot,  \cdot\rangle_\mathcal{Z}$ denotes the inner product in $\mathcal Z$. The function $\varphi$ is defined implicitly by the choice of the kernel.  

We shall use the one-stage large-scale optimization model \eqref{one-stage-obj}--\eqref{one-stage-Znon-neg}. To reflect that the data is mapped into the space $\mathcal{Z}$, we need modify constraints \eqref{one-stage-ziell} as follows
\begin{equation}
\label{one-stage-ziell-mod}
z^i_\ell \geq \psi_j\big(\varphi(x^i_\ell)\big) - \psi_i \big(\varphi(x^i_\ell)\big) +1 \quad 
     i \in\Jc, \;  \ell = 1, \ldots, m_i, \; \; j\in \Jc^{-i}
\end{equation}
Due to the finite number of realizations, we also can view $Q$ as a vector in $\Rb^N.$ 
Similarly, the random variables $Z^i$, $i\in\Jc$, can be viewed as vectors in $\Rb^{m_i}$. 
We observe that for the optimal solution $(\hat{v},\hat{Q},\hat{Z})$, the last two elements: $\hat{Q}$ and $\hat{Z}$, have at least one positive component unless the classes are separable ideally.  

Notice that problem \eqref{one-stage-obj}--\eqref{one-stage-Znon-neg} is a convex optimization problem that obviously satisfies The Slater's constraint qualification condition. We assign Lagrange multipliers $\lambda_i\geq 0$ to each constraint \eqref{one-stage-qi} and $\mu_{j,\ell}^i\geq 0$ to each constraints \eqref{one-stage-ziell-mod}. The Lagrange function has the form
\begin{equation} 
\begin{aligned}
L(v,Q,Z,\lambda, \mu) & = \varrho_{\rm o}[Q] + \sigma \left\|v\right\|^2 + \sum_{i\in\Jc} \lambda_i\big( \varrho_i[Z^i] -q_i\big) \\ 
& \quad +  \sum_{i\in\Jc} \sum_{j \in \Jc^{-i}} \sum_{\ell=1}^{m_i} \mu_{j, \ell}^i\big(\left\langle v^j, \varphi\left(x_\ell^i\right)\right\rangle-\left\langle v^i, \varphi\left(x_\ell^i\right)\right\rangle+1 -z_\ell^i\big).\\
& =  \varrho_{\rm o}[Q] - \sum_{i\in\Jc} \lambda_i q_i +  
    \sum_{i\in\Jc} \Big( \lambda_i\varrho_i[Z^i] - \sum_{j \in \Jc^{-i}} \sum_{\ell=1}^{m_i} \mu_{j, \ell}^i z^i_\ell\Big) \\
 & \quad +     \sum_{i\in\Jc} \Big( \sigma \|v^i\|^2 + \sum_{j \in \Jc^{-i}} \sum_{\ell=1}^{m_i} \mu_{j, \ell}^i\big(\left\langle v^j-v^i, \varphi\left(x_\ell^i\right)\right\rangle+1\big)\Big). 
\end{aligned}
\label{eq:lagrange-total}
\end{equation}
To calculate the dual function, we minimize the Lagrange function with respect to $(v,Q,Z)$.  Minimization with respect to $Q$ leads to the problem
\begin{equation}
\label{p:q_constr}
    \min_{Q} \varrho_{\rm o}[Q] - \sum_{i\in\Jc} \lambda_i q_i. 
\end{equation}
Hence,  $\lambda \in \partial \varrho_{\rm o}[\hat{Q}]$ is necessary for the minimum to be finite, where $\partial \varrho_{\rm o}[\hat{Q}]$ refers to the convex subdifferential of $\varrho_{\rm o}(\cdot)$ at $\hat{Q}$. Since $\varrho_{\rm o}(\cdot)$ is positively homogeneous, we obtain that the minimum is either zero or $-\infty$. 

Further, we denote $\mu_{j}^i\in\Rb^{m_i}$ the vector with components $\mu_{j,\ell}^i$, $\ell=1,\dots m_i$, and we consider 
\begin{equation}
\label{p:z_constr}
    \min_{Z^i\geq 0} \lambda_i\varrho_i(Z^i) - \big\langle Z^i, \sum_{j \in \Jc^{-i}} \mu_{j}^i \big\rangle. 
\end{equation} 
The optimality conditions for problem \eqref{p:z_constr} state that at the optimal solution $\hat{Z}^i$, a subgradient $\zeta^i\in\partial \varrho (\hat{Z}^i)$ exists such that 
\begin{equation}
\label{e:opt_cond_z}
  \lambda_i\zeta^i - \sum_{j \in \Jc^{-i}} \mu_{j}^i \geq 0 \quad\text{and}\quad \langle \hat{Z}^i , \lambda_i\zeta^i - \sum_{j \in \Jc^{-i}} \mu_{j}^i \rangle  =0.  
\end{equation}
Since $\varrho (\hat{Z}^i) = \langle \hat{Z}^i , \zeta^i \rangle$ for $\zeta^i\in\partial \varrho (\hat{Z}^i)$, we conclude that
\[
    \min_{Z^i\geq 0} \lambda_i\varrho (\hat{Z}^i) - \big\langle \hat{Z}^i, \sum_{j \in \Jc^{-i}} \mu_{j}^i \big\rangle =0. 
\]
Notice that if $\hat{Z}^i$ is such that for all $\zeta^i\in\partial \varrho (\hat{Z}^i)$ a component $\ell$ exists such that 
$\lambda_i\zeta^i_\ell - \sum_{j \in \Jc^{-i}} \mu_{j,\ell}^i < 0$, then problem \eqref{p:z_constr} is unbounded and, hence, the dual function is infinite.  Thus, $\lambda_i\zeta^i \geq \sum_{j \in \Jc^{-i}} \mu_{j}^i$ is required for $\mu$ to be in the domain of the dual function.

We denote $\tilde{\mu}^i = \sum_{j \in \Jc^{-i}} \mu_{j}^i$. 
Proposition 2 in \cite{AADD_nonlinear} implies that 
\[
        \partial \varrho_\sys(Z) = \Big\{ 
        \theta \in \Lc_q(\Omega, \Fc, P;\Rb^N) : \theta_i = \lambda_i \zeta_i : ~ \lambda \in \partial \varrho_{\rm o}[Q], ~ \zeta_i \in \partial\varrho_i (Z^i)\text{ for all } i\in\Omega_N.  
        \Big\}.
\]
Hence, we recognize that \eqref{e:opt_cond_z} states that
\begin{equation}
\label{e:1opt_cond_z}
  \tilde{\mu}^i \leq \theta^i \quad\text{and}\quad \hat{Z}^i_\ell (\theta^i_\ell  - \tilde{\mu}^i_\ell)  =0\; \text{ for all } i\in\Jc, \text{ and for all } \ell=1,\dots, m_i.  
\end{equation}
Turning to the minimization of the terms in the Lagrangian related to $v$, we obtain from the optimality conditions that for all $i\in\Jc$ the following equality holds:
\begin{equation}
\label{e:opt-conf-1st}
 2\sigma \hat{v}^i - \sum_{j \in \Jc^{-i}} \sum_{\ell=1}^{m_i} \mu_{j, \ell}^i\varphi(x_\ell^i)   
                   + \sum_{j \in \Jc^{-i}} \sum_{\ell=1}^{m_j} \mu_{i, \ell}^j\varphi(x_\ell^j) = 0.  
\end{equation}
Hence, for all $i\in\Jc$, we obtain the representation
\begin{equation}
\label{e:v-represent}
 \hat{v}^i = \frac{1}{2\sigma } \Big( \sum_{k \in \Jc^{-i}} \sum_{\ell=1}^{m_i} \mu_{k, \ell}^i\varphi(x_\ell^i)  
                   - \sum_{k \in \Jc^{-i}} \sum_{\ell=1}^{m_k} \mu_{i, \ell}^k\varphi(x_\ell^k)\Big).  
\end{equation}
It implies that $\|v\|^2$  can be calculated by using only the kernel $K(\cdot,\cdot)$ as follows
\begin{align*}
\|v^i\|^2 = \frac{1}{4\sigma^2} & \bigg(
\sum_{k,s \in \Jc^{-i}} \sum_{\ell=1}^{m_i} \sum_{r=1}^{m_i} \mu_{k,\ell}^i \mu_{s,r}^i\, K(x_\ell^i, x_{r}^i) - 2 \sum_{k,s \in \Jc^{-i}} \sum_{\ell=1}^{m_i} \sum_{r=1}^{m_{s}} \mu_{k,\ell}^i \mu_{i,r}^{s}\, K(x_\ell^i, x_{r}^{s}) \\
& + \sum_{k,s \in \Jc^{-i}} \sum_{\ell=1}^{m_k} \sum_{r=1}^{m_{s}} \mu_{i,\ell}^k \mu_{i,r}^{s}\, K(x_\ell^k, x_{r}^{s})
\bigg).
\end{align*}
We designate the quadratic form by $g_i(\mu)$, i.e., 
\[  
\|v^i\|^2 = \frac{1}{4\sigma^2} g_i(\mu).
\]
Furthermore, the functions $ \langle v^j - v^i, \varphi(x_r^i)\rangle $ take on the form 
\begin{align*}
 h_{jr}^i (\mu) & = \langle v^j - v^i,  \varphi(x_r^i)\rangle  
  = \frac{1}{2\sigma} \Big\langle  \sum_{k \in \Jc^{-j}} \sum_{\ell=1}^{m_j} \mu_{k, \ell}^j\varphi(x_\ell^j)  
                   - \sum_{k \in \Jc^{-j}} \sum_{\ell=1}^{m_k} \mu_{j, \ell}^k\varphi(x_\ell^k) \\
 &\qquad                 
                  - \sum_{k \in \Jc^{-i}} \sum_{\ell=1}^{m_i} \mu_{k, \ell}^i\varphi(x_\ell^i)  
                   + \sum_{k \in \Jc^{-i}} \sum_{\ell=1}^{m_k} \mu_{i, \ell}^k\varphi(x_\ell^k), \varphi(x_r^i)\Big\rangle \\
& = \frac{1}{2\sigma} \bigg[ \sum_{k \in \Jc^{-j}} \sum_{\ell=1}^{m_j} \mu_{k, \ell}^j \, K(x_\ell^j, x_r^i)
- \sum_{k \in \Jc^{-j}} \sum_{\ell=1}^{m_k} \mu_{j, \ell}^k \, K(x_\ell^k, x_r^i) \\
&\quad\quad
- \sum_{k \in \Jc^{-i}} \sum_{\ell=1}^{m_i} \mu_{k, \ell}^i \, K(x_\ell^i, x_r^i)
+ \sum_{k \in \Jc^{-i}} \sum_{\ell=1}^{m_k} \mu_{i, \ell}^k \, K(x_\ell^k, x_r^i)\bigg].
\end{align*}

Denote by $H^i_r (\mu) = \max\big\{0,\max_{j\in\Jc^{-i}} \big(h_{jr}^i (\mu)+1\big)\big\}.$ 

We gather all components 
$r=1,\dots, m_i$ in a vector function $H^i(\mu)$, which represents a random function with realizations $H^i_r (\mu)$. Further, we form the vector function $H(\mu)$ comprising all $H^i(\mu)$, $i\in\Jc$; $H(\mu)$ can be viewed as a random function as well.  Notice that the functions $H^i_r (\cdot)$  are convex because all functions $h_{jr}^i (\cdot)$ are linear. 

Using our observations we infer the following form of the dual function over its domain:
\begin{equation}
\begin{aligned}
D(\lambda, \mu) &  = \sum_{i\in\Jc} \Big( \sigma \|v^i\|^2 + \sum_{j \in \Jc^{-i}} \sum_{r=1}^{m_i} \mu_{j, r}^i\big(\left\langle v^j-v^i, \varphi\left(x_r^i\right)\right\rangle+1\big)\Big) \\
& = \sum_{i\in\Jc}  \Big(\frac{1}{4\sigma} g_i(\mu) + \sum_{j \in \Jc^{-i}} \sum_{r=1}^{m_i} \mu_{j, r}^i  \big(h_{jr}^i (\mu)+1\big)\Big).
\end{aligned}
\label{eq:dulfct-2st}
\end{equation}
The dual problem takes on the form
\begin{equation}
\label{p:dual-onestage}
\begin{aligned}
\max_{\lambda, \mu,\theta}\; & \sum_{i\in\Jc}  \Big(\frac{1}{4\sigma} g_i(\mu) + \sum_{j \in \Jc^{-i}} \sum_{r=1}^{m_i} \mu_{j, r}^i  \big(h_{jr}^i (\mu)+1\big)\Big)\\
\text{s.t. }\; & \sum_{j \in\Jc^{-i} } \mu_{j}^i \leq \theta_i, \quad \theta \in \partial \varrho_\sys (H(\mu)),\; i\in\Jc. \\
&  \mu_{j}^i \geq 0, \quad \forall i\in\Jc, \;j\in\Jc^{-i} .  
\end{aligned}  
\end{equation}
Notice that for the optimal solution $\hat{\mu}$, the components $\hat{\mu}_{j, \ell}^i =0$ for all $\ell =1,\dots, m_i$ such that 
$h_{j\ell}^i + 1 < 0$ and  $\sum_{j \in \Jc^{-i}} \hat{\mu}_{j,\ell}^i = \theta^i_\ell $ when $h_{j\ell}^i + 1 > 0$. 
The dual problem always has an optimal solution because the subdifferential $ \partial \varrho_\sys (\cdot)$ is a compact set, making the feasible set of problem \eqref{p:dual-onestage} compact as well. Additionally, for most of the coherent measures of risk analytical description of the subdifferential is available. 
We obtain an equivalent formulation of problem \eqref{p:dual-onestage} by using the dual representation of $\rho_\sys(\cdot)$. Denoting the empirical distribution function on $(1,\dots, m_i)$ by $P_i$, problem \eqref{p:dual-onestage} takes on the form:
\begin{equation}
\label{p:newdual-onestage}
\begin{aligned}
\max_{\lambda, \mu,\zeta}\; & \sum_{i\in\Jc}  \Big(\frac{1}{4\sigma} g_i(\mu) + \sum_{j \in \Jc^{-i}} \sum_{r=1}^{m_i} \mu_{j, r}^i  \big(h_{jr}^i (\mu)+1\big)\Big) + \sum_{i\in\Jc} \sum_{\ell =1}^{m_i} c_i\lambda_i\langle\zeta^i, H^i(\mu)\rangle \\
\text{s.t. }\; & \sum_{j \in\Jc^{-i} } \mu_{j}^i \leq \lambda_i\zeta^i, \quad \lambda\in \Ac_{o} (c),\;  \zeta^i \in \Ac_{\varrho_i}(P_i)\;\; i\in\Jc,\\
&  \mu_{j}^i \geq 0, \quad \forall i\in\Jc, \;j\in\Jc^{-i}\!\!.  
\end{aligned}  
\end{equation}
We recognize the structure of a two-stage problem, in which $\mu$ is a first-stage variable, while $(\lambda,\zeta)$ are second-stage variables. 
Hence, we arrive at the following statement.
\begin{theorem}
Assume that $K:\mathcal{Z}\times\mathcal{Z}\to\Rb$ is a positive-definite kernel function with representation
$K(x,x') = \langle \varphi (x),  \varphi (x')\rangle_\mathcal{Z}.$ Consider the classification problem \eqref{p:first-stage}--\eqref{p:ksecond-stage}, where 
\begin{equation}
\label{p:ksecond-stage}
\begin{aligned}
   R_i(\vartheta) = \min_{Z^i\geq 0} \quad & \varrho_i[Z^i]\quad
    \text{s.t.} \quad  z^i_\ell \geq \langle v^j, \varphi(x^i_\ell)\rangle - \langle v^i, \varphi(x^i_\ell)\rangle +1 \quad 
     j \in \Jc^{-i}, \;  \ell = 1, \ldots, m_i.
\end{aligned}
\end{equation}
The dual problem to \eqref{p:first-stage}--\eqref{p:ksecond-stage} is \eqref{p:newdual-onestage}. 
A new data $x$ is classified in class $\kappa$, $\kappa\in\Jc$ if the maximum 
\begin{equation}
\label{eq:kernel-classify}
\max_{i\in\Jc}  \Big( \sum_{k \in \Jc^{-i}} \sum_{\ell=1}^{m_i} \hat{\mu}_{k, \ell}^i K(x_\ell^i,x)  
                   - \sum_{k \in \Jc^{-i}} \sum_{\ell=1}^{m_k} \hat{\mu}_{i, \ell}^k K(x_\ell^k,x)\Big)
\end{equation}
 is achieved for $i=\kappa,$
 where $\hat{\mu}$ is the solution of problem \eqref{p:dual-onestage}.
\end{theorem}
\begin{proof}
Problem \eqref{p:first-stage}--\eqref{p:ksecond-stage} is equivalent to the one-stage problem 
\begin{align*}
\min_{v,\gamma,Q,Z} \quad & \varrho_{\rm o}[Q] + \sigma \sum_{i\in\Jc} \left\|v^i\right\|^2, \\
\text{s.t. }\; & 
   q_i \geq  \varrho_i(Z^i),\quad i\in\Jc,\\
     & z^i_\ell \geq \psi_j\big(\varphi(x^i_\ell)\big) - \psi_i \big(\varphi(x^i_\ell)\big) +1 \quad 
     i \in\Jc, \;  \ell = 1, \ldots, m_i, \; \; j\in \Jc^{-i},  \\
    & Z^i \geq 0 \quad i\in\Jc.  
\end{align*}
We have derived the dual problem to it and have shown that it is equivalent to problem \eqref{p:newdual-onestage}.
According to the classification rule, we assign a new data $x$ to class $\kappa$, $\kappa\in\Jc$ if the maximum 
of $\psi_\kappa\big(\varphi(x)\big)  = \max_{j\in \Jc\; j\neq i} \big(\varphi(x)\big)$. 
Using formula \eqref{e:v-represent}, we obtain the result.
    \end{proof}

\section{Numerical method for solving the two-stage problem}

 We note that our methods are described by individual risk measures for each of the classes but we believe that measuring classification errors with the same risk function facilitates fairness and might be more appropriate. 
To solve the two-stage problem \eqref{p:first-stage}-\eqref{p:second-stage}, we propose a specialized risk-averse multi-cut decomposition method. Our starting point is the risk-averse version of the multi-cut method in \cite{gulten2015two}.
The method is called "multi-cut" because a separate approximations of the individual risk measures are constructed by cutting planes. The regularization term controls the length of the steps during each iteration bringing stability to the process. Additionally, our regularization method has the property of converging over unbounded regions.

Note that we can view the set $\Ac_{\rm o}(c)$ as a subset of the unit simplex in $\Rb^{N}$ containing probability measures.  The dual representation for $\varrho_{\rm o}$ becomes
\begin{equation}
\varrho_{\rm o}[R(\vartheta)] = \max_{\mu\in\Ac_{\rm o}} \sum_{i\in\Jc}  \mu_i R_i(\vartheta_i).
\label{eq:obj-cut-source}
\end{equation}
Similarly, the dual sets $\mathcal{A}_i(P_i)$ associated with $\varrho_i$ become subsets of the unit simplex in $\Rb^{m_i}$ containing probability measures.  The dual set of many popular risk measures is described analytically; see \cite{DDARriskbook}. 

In the course of the proposed iterative method, the set $\Ac_{\rm o}$ is approximated by a finite subset of measures $\{\mu^\kc, \kc=1,\ldots,k\}$ that are collected at each iteration. The first-stage objective value is approximated by a variable $\alpha$ and the second-stage objective value of the $i$-th scenario is approximated by $r_i$.  According to \eqref{eq:obj-cut-source}, we construct the following cut (approximating tangent plane) at iteration $k$:
\begin{equation}
\alpha \geq \sum_{i\in\Jc} \mu_i^k r_i.
\label{eq:obj-cut}
\end{equation}
The subgradient $\mu_i^k$, $i\in\Jc$ at iteration $k$ are calculated by identifying those elements in  $\Ac_{\rm o}$ for which the maximum in \eqref{eq:obj-cut-source} is attained. 
The constraint \eqref{eq:obj-cut}, which may be called an objective cut, approximates the first-stage objective function from below.
For the second-stage problem of the $i$-th scenario, let $\pi_{j\ell}^i$ stand for the optimal Lagrange multipliers associated with the inequality constraints  
\[
-z^i_\ell \leq \langle \vartheta^{ik}, (x^i_\ell, -1) \rangle  -  \langle \vartheta^{jk}, (x^i_\ell, -1) \rangle + 1 \quad \ell=1,\dots m_i,\;j\in\Jc^{-i}. 
\]
where $\vartheta^{jk}$ for $j\in\Jc$ stands for the first-stage decision variables at the $k$-th iteration of the method. 
Further, let $D_i$ denote the expanded data matrix for class $i$:
\[
D_i=\begin{pmatrix}x^i_{11} & x^i_{12} & \ldots & x^i_{1n} & -1 \\ x^i_{21} & x^i_{22} & \ldots & x^i_{2n} & -1 \\ \vdots & \vdots & \ddots & \vdots & \vdots \\ x^i_{m_i1} & x^i_{m_i2} & \ldots & x^i_{m_in} & -1\end{pmatrix}.
\]
The subdifferential of the $i$-th second-stage optimal value function $\partial R_i(\vartheta^k)$ with respect to $\vartheta^k$ contains vectors $g^k$ whose components involve the optimal Lagrange multiplies $\pi_{j,\ell}^{ik}$ and are calculated as follows. For all $i\in\Jc$,
\begin{equation}
g_j^{ik} = 
\begin{cases} 
-D_i^\top \sum_{s \in \Jc^{-i}} \pi_{s}^{ik} & \text{if }  j=i.\\
D_i^\top \pi_{j}^{ik} & \text{otherwise.}
\end{cases}
\label{eq:g-formula}
\end{equation}
We construct the following regularized master problem at iteration $k$:
\begin{equation}
\begin{aligned}
\min_{\alpha, r, \vartheta} \ & \alpha + \sigma\|v\|^2+ \beta\|\vartheta-w^k\|^2\\\
\text{s.t.} \quad & \alpha \geq \sum_{i\in\Jc} \mu_i^\kc r_i, \quad \kc \in\Kc_0 \\
& r_i \geq R_i^\kc + \sum_{j\in\Jc} \langle g_{j}^{i\kc} , \vartheta^j - \vartheta^{j\kc}\rangle  \quad \kc\in \Kc_i,\;  \ i\in\Jc, \\
& \alpha, r_i \geq 0 \quad i\in\Jc.
\end{aligned}
\label{p:rd-rgmaster}
\end{equation} 
The parameter $\beta>0$ is the regularization parameter associated with the weight of the proximal term.
The solution $\alpha^k$ of problem \eqref{p:rd-rgmaster} provides an approximation of the systemic measure of risk in the first stage problem \eqref{p:first-stage}.  The new proximal center $w^k = (w_1^k,\ldots,w_N^k)$ is updated within each iteration based on the relation among 
$\varrho_{\rm o}[R(\vartheta^k)]$, $\varrho_{\rm o}[R(w^{k-1})]$, and $\alpha^k$.  

We propose the following regularized multi-cut method with a parameter $\delta\in (0,1)$.

\begin{minipage}{0.98\textwidth}
   \begin{tightitemize}
    \item[]   \vspace{0.2cm} \hrule \vspace{0.2cm}
                \textbf{Multi-class Classification with Systemic Risk } 
                \vspace{0.2cm} \hrule \vspace{0.2cm}  
\item[\emph{Step 0.}] Set $ k = 1 $. Choose initial decision variable $\vartheta^1$ with $\|v^i\|=1$ for all $i\in\Jc$. 
\item[\emph{Step 1.}] For each $i\in\Jc $, solve the second-stage problem \eqref{p:second-stage}. Let $R_i^k$ be its optimal value. Calculate the subgradients $g_i^k = (g_{i1}^k,\ldots, g_{iN}^k)$ by formula \eqref{eq:g-formula} and add the new cuts to $\Kc_i$, $i\in\Jc$.
\item[\emph{Step 2.}] Calculate the systemic risk $\varrho^k =\varrho_{\rm o}[R^k]$ where $ R^k$ has realizations $R_i^k\; i\in\Jc$ and 
calculate $\mu^k$ at point $R^k$ using \eqref{eq:obj-cut-source} and include the new cut in the set $\Kc_0$. 
\item[\emph{Step 3.}] Determine the new center $w^k$ as follows. If $k=1$ or 
\[
\varrho^k\leq (1-\delta)\bar{\varrho}^{k-1}+\delta \alpha^{k-1}, 
\]
then set $w^k=\vartheta^k$ and  $\bar{\varrho}^{k}=\varrho^k$ (descent step). Otherwise, set $w^k=w^{k-1}$ and  $\bar{\varrho}^{k}=\bar{\varrho}^{k-1}$   (null step). 
\item[\emph{Step 4.}] Solve the master problem \eqref{p:rd-rgmaster}. Denote the solution by $\alpha^k, \vartheta^k, r^k$.
\item[\emph{Step 5.}] If $\bar{\varrho}^{k} = \alpha^k $, then stop ($w^k$ is an optimal solution); otherwise continue.
\item[\emph{Step 6.}] Remove from the sets $\Kc_i$, $i=0, 1,\dots, N$, the constraints whose Lagrange multipliers at the solution of  \eqref{p:rd-rgmaster} are~0.
Increase $k $ by 1 and go to {Step 1}.
  \item[]   \vspace{0.2cm} \hrule \vspace{0.2cm}
\end{tightitemize}
\end{minipage}
Now, we show the convergence of the method.  
\begin{theorem}
The regularized multi-cut method generates a sequence
$\{w^k\}$ which converges to an optimal solution $\vartheta^*$ of problem \eqref{p:first-stage}--\eqref{p:second-stage}. 
Furthermore, the optimal solutions $\{\alpha^k\}$ of the master problem \eqref{p:rd-rgmaster} converge to the optimal classification risk value
$\varrho^*_{\rm sys}=\varrho_{\rm o}[R(\vartheta^*)]$ and $\lim_{k\rightarrow \infty}\|\vartheta^{k+1}-w^k\|  = 0.$
\end{theorem}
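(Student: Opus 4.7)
The plan is to follow the classical convergence template for proximal bundle/regularized decomposition methods, adapted to the two-stage risk-averse setting along the lines of \cite{gulten2015two}. I would organize the argument into three phases: verifying that the master problem \eqref{p:rd-rgmaster} is a valid outer approximation of the true first-stage objective $F(\vartheta) := \varrho_{\rm o}[R(\vartheta)]$; analyzing the interleaving of null and descent steps to show that infinitely many consecutive null steps cannot occur without termination or a descent being triggered; and passing to the limit on the two qualitative cases, namely finitely many versus infinitely many descent steps.

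In the first phase I would verify that each second-stage value function $R_i(\vartheta)$ is convex as a parametric linear program whose data depend linearly on $\vartheta$; LP duality then certifies that the vector $g_i^k$ defined in \eqref{eq:g-formula} lies in $\partial R_i(\vartheta^k)$, so the $r_i$-cuts in \eqref{p:rd-rgmaster} are valid inequalities $R_i(\vartheta) \geq R_i^k + \sum_j \langle g_{ij}^k,\vartheta_j-\vartheta_j^k\rangle$. Combining this with the dual representation $\varrho_{\rm o}[R] = \max_{\mu \in \Ac_{\rm o}} \sum_i \mu_i R_i$, where $\mu^k \in \Ac_{\rm o}$ attains the supremum at $R^k$ by construction in \eqref{eq:mu-subgradients}, the $\alpha$-cuts also lower-bound $F$. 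Hence the aggregated vector $\sum_i \mu_i^k g_i^k$ is a genuine subgradient of $F$ at $\vartheta^k$, and the optimal master value (minus the proximal term) is a valid global lower bound on $F$.

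The second phase isolates the behavior of the algorithm. On a descent step the stability center improves by at least $\theta(\bar\varrho^{k-1}-\alpha^{k-1}) \geq 0$, so $\{\bar\varrho^k\}$ is nonincreasing. During a run of consecutive null steps the center $w^k$ is frozen, and each new cut must be strictly violated by the previous piecewise-linear model $\phi_k$, forcing $\phi_{k+1}(\vartheta^k) > \phi_k(\vartheta^k)$. Strong convexity of the proximal term then yields the key estimate $\sigma\|\vartheta^{k+1}-w^k\|^2 \leq \bar\varrho^k-\alpha^k$, which both drives $\|\vartheta^{k+1}-w^k\|\to 0$ and, via a standard bundle argument, shows that an infinite sequence of consecutive null steps forces $\alpha^k \uparrow F(w^k)$, thereby triggering either the stopping test $\bar\varrho^k=\alpha^k$ or a descent. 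If only finitely many descent steps occur, after some $K$ we have $w^k\equiv w^K$ and this phase identifies $w^K$ as optimal; otherwise, the descent inequality combined with strong convexity yields $\sum_k\|w^{k+1}-w^k\|^2<\infty$, so $\{w^k\}$ is Cauchy with limit $\vartheta^*$, and lower semicontinuity of $F$ together with the global validity of $\alpha^k$ as a lower bound forces $F(\vartheta^*) \leq F(\vartheta)$ for all $\vartheta$.

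The main obstacle will be handling the composite subgradient structure cleanly: the outer cuts use multipliers $\mu^k\in\Ac_{\rm o}$ that themselves depend on the realized inner risks $R^k=R(\vartheta^k)$, so the bundle must simultaneously refine the inner LP value functions and the outer risk-measure aggregation. Making the null-step improvement quantitative in this two-level setting---showing that a null step rules out $\vartheta^k$ from being optimal by a definite margin and not by a vanishing one---is the delicate step where the analysis of \cite{gulten2015two} must be adapted. The saving grace is that when $\Omega_N$ is finite, $\Ac_{\rm o}$ is a polytope with finitely many extreme points, so only finitely many distinct $\mu^k$ can appear; combined with the finiteness of the LP basis structure that generates $g_i^k$, this yields a compact family of candidate cuts and closes the convergence argument.
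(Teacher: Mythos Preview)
Your proposal takes a substantially different route from the paper. Rather than reprove the bundle/regularized-decomposition convergence machinery, the paper reduces the theorem to the convergence result of \cite{ruszczynski1986regularized} by verifying that its hypotheses hold in the present setting: the second-stage problems \eqref{p:second-stage} are always feasible (complete recourse), so each $R_i$ is finite-valued on all of $\Rb^{N(n+1)}$ and the systemic risk is finite; the proximal term makes the master problem coercive and hence solvable; an optimal solution of the two-stage problem exists because the coherent risk measure is the support function of a compact set; and the subgradients $g_{ij}^k$ are uniformly bounded because the domain of $R$ is the entire space, so no iterate can lie on its boundary. After these checks the paper simply invokes the cited theorem, and the two-level subgradient structure you flag as the ``main obstacle'' never has to be analyzed directly.

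Your self-contained argument also contains two concrete gaps. First, the implication ``$\sum_k\|w^{k+1}-w^k\|^2<\infty$, so $\{w^k\}$ is Cauchy'' is false in general (take $w^k=\sum_{j\le k}1/j$ in $\Rb$); the correct bundle-method arguments obtain convergence of the centers via a Fej\'er-type inequality of the form $\|w^{k+1}-\vartheta^*\|^2\le\|w^k-\vartheta^*\|^2+\varepsilon_k$ with summable $\varepsilon_k$, not via square-summability of the increments. Second, your closing ``saving grace'' assumes that $\Ac_{\rm o}$ is a polytope with finitely many extreme points, but this fails for the mean--semi-deviation of order $p>1$ appearing in \eqref{p:first-stage}: its dual set involves an $\ell_q$-ball constraint and is not polyhedral, so finiteness of the family of multipliers $\mu^k$ cannot be invoked to close the argument. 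Both issues are bypassed by the paper's reduction strategy, whose only analytical burden is the uniform bound on the subgradients required by \cite{ruszczynski1986regularized}.
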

\begin{proof}
First, we observe that the second stage problem \eqref{p:second-stage} is always solvable and, hence, the two-stage problem has a complete recourse.
Therefore, the domain of the functions $R_i(\vartheta)$ is the entire space $\Rb^{N(n+1)}$ and the random variable $R(\vartheta)$ is bounded, which entails that the systemic risk is always finite. The regularization term in the objective function of the first-stage problem \eqref{p:first-stage}, makes the objective coercive. Hence, the first stage problem is also solvable and the entire two-stage classification problem has an optimal solution.
Due to the proximal term, the objective function of problem \eqref{p:rd-rgmaster} has compact level sets. Hence, the master problem is always solvable. After solving the second stage problem for all classes, we obtain all subgradients $g_i^k,$ $i\in\Jc$ necessary for the cutting plane approximation of the respective functions $R_i(\cdot)$. The approximation of the systemic risk measure by cutting planes uses the separate approximations of $R_i(\cdot)$. Therefore, the convergence of our method follows by the convergence properties of the regularized decomposition method for two-stage problems described in \cite{ruszczynski1986regularized}. We only need argue that the assumptions made there are satisfied: existence of an optimal solution and the existence of an uniform bound $C$ such that 
$\|g_{ij}^k\| \le C$ for all $k=1,2,\dots$ and all $i,j\in\Jc$. Due to the complete recourse and the fact that the risk measure is the support function of a compact closed set (by its dual representation), we conclude that an optimal solution the two-stage problem exists. The only situation, in which the uniform boundedness of the vectors $g_{ij}^k$ may be violated is when $\vartheta^k$ is a boundary point of the domain of $R(\cdot)$. Since the domain is the entire space, we infer that this assumption is satisfied as well.
\end{proof}

\begin{theorem}
\label{t:complexity}
The computational complexity of the numerical method for classification with systemic risk of $N$ classes based on data set of $n-$ dimensional points increases linearly with the number of data points.  
\end{theorem}
\begin{proof}
First we show that all relevant information from the second-stage problem is obtained in a closed form. 
First, the class–wise second–stage problem admits a closed form solution and  the optimal Lagrange multipliers, which are needed in the method, can also  be written in closed form.  As a result, the second stage problem can be handled without calling a solver. 
We provide the form of these quantities. Define the margins 
$b_{j\ell}^i(\vartheta) = \langle \vartheta_j,x^i_\ell\rangle-\langle \vartheta_i,x^i_\ell\rangle+1$, $~j\in\Jc^{-i}$, $\ell=1,\dots,m_i.$
The second-stage problem \eqref{p:second-stage} for any $i\in\Jc$ takes on the form:
\begin{align}
\min_{Z^i} \quad & \varrho_i[Z^i]\\
    \text{s.t.} \quad  
& z^i_\ell \;\ge\; b_{j\ell}^i(\vartheta), \quad j\in\Jc^{-i},\; \ell=1,\dots,m_i,\label{zi-constr}\\
& Z^i\ge 0.
\end{align}
The value of $b_{j\ell}(\vartheta)$ are known once the first stage decision variables are given. 
We obtain the solution $\hat{z}_\ell^i =\max\big(0, \max_{j\in\Jc^{-i}} b_{j\ell}(\vartheta)\big).$
Let $I_\ell\subset \Jc^{-i}$ contain all indeces $j\in\Jc^{-i}$ for which $z^i_\ell = b_{j\ell}^i(\vartheta)$. 
Assigning dual variables $\pi_{j\ell}^i\ge 0$ to constraints \eqref{zi-constr}, 
the optimality conditions state that at the optimal solution $\hat{Z}^i$, 
a subgradient $\zeta^i\in\partial \varrho (\hat{Z}^i)$ exists such that 
\begin{equation}
\label{e:opt_cond_compl}
  \zeta^i - \sum_{j \in \Jc^{-i}} \pi_{j}^i \geq 0 \quad\text{and}\quad \langle \hat{Z}^i , \zeta^i - \sum_{j \in \Jc^{-i}} \pi_{j}^i \rangle  =0.  
\end{equation}
Let $\xi^i$ be any subgradient in $\partial \varrho_i (\hat{Z}^i)$. 
Recall that $\varrho_i(\hat{Z}^i) = \langle \hat{Z}^i , \xi^i \rangle$ for any $\xi^i\in\partial \varrho (\hat{Z}^i)$ and that $\xi^i\geq 0$.
The complementarity conditions imply, that the optimal Lagrange multipliers are 
\begin{gather*}
    \hat{\pi}^i_{j\ell} = \begin{cases}
0, & \text{ if }  \hat{z}_\ell^i = 0 > b_{j\ell}^i(\vartheta),\\
\text{any non-negative value:  }  \xi^i_\ell \geq  \sum_{j \in I_\ell} \pi_{j\ell}^i & \text{ if } \hat{z}_\ell^i = b_{j\ell}^i(\vartheta) = 0,\\
\text{any non-negative value:  } \xi^i_\ell = \sum_{j \in I_\ell} \pi_{j\ell}^i & \text{ if } \hat{z}_\ell^i = b_{j\ell}^i(\vartheta) > 0.
\end{cases}
\end{gather*}
Since the components of $\pi_j^i$ do not interact, the distribution of the value of the respective component of $\xi^i$ is always possible. Therefore, we can use any subgradient of $\varrho(\hat{Z}^i)$ at the calculated optimal solution. Notice also, that for most of the components $\ell=1,\dots m_i$, the set $I_\ell$ will be a singleton, which will simplify the assignment. Hence, the optimality conditions \eqref{e:opt_cond_compl} can be satisfied by using any $\xi$. 
Therefore, the computational effort of this part will increase linearly with the number of data points in the training set. 

Recall that $N=|\Jc|$ represents the number of classes, $n$ denotes the number of features, and denote the overall sample size is $M = \sum_{i\in\Jc} m_i$. 
The regularized decomposition method solves at each iteration only the master problem, which has $1 + N(n{+}2)$ variables. All constraints are linear. Hence, from the optimality conditions follows that at most $1 + N(n{+}2)$ Lagrange Multipliers may be positive. Thus, after Step 6, at most $1 + N(n{+}2)$
constraints are kept in \eqref{p:rd-rgmaster}. In Step 1, at most $N$ new inequalities are generated. Therefore, the problem \eqref{p:rd-rgmaster} that is solved in Step 4, has no more than $1 + N(n{+}3)$ constraints. We conclude that the regularized decomposition master is independent of the number of points in the data set. 
\end{proof}
We note that the size of the one-stage problem formulation \eqref{one-stage-obj}-\eqref{one-stage-Znon-neg} depends on the sample size: it has $N(n{+}2) + M$ variables and $N(M+1)$ constraints.  

\section{Risk-averse classification and fairness} 
\label{sec:fair}

The issue of fairness has attracted significant attention in the machine learning community recently, highlighting concerns about biased outcomes in automated decision-making. These algorithms, used in areas such as job recruitment, credit risk assessment and others, might carry on or even aggravate social biases if not carefully designed. As a result, researchers and practitioners are increasingly focused on ensuring these technologies being both accurate and fair.
Current work regarding fairness machine learning presents three different approaches: pre-processing the data before learning \cite{feldman2015certifying, kamiran2012data, zemel2013learning}, forcing fairness through the learning process or within the learning model \cite{rychener2022metrizing, roh2020fairbatch, donini2018empirical, fish2016confidence}, and post-processing the results after learning \cite{chzhen2020fair, pleiss2017fairness}. 

To evaluate the fairness of the results and compare among different methods, it is crucial to quantify fairness using some metrics. Previous research works on fairness introduce several different metrics. We adopt the commonly used notion of \emph{Equal Opportunity (EO)}. The definition that is suggested in \cite{wang2024wasserstein} requires the comparison of two probabilities as follows.
     Given a labelled dataset $D = (X_i, L_i, Y_i)_{i \in \Jc}$, where $X_i$ represents the vector of features, $L_i \in \{0,1\}$ represents the label, and $Y_i \in \{0,1\}$ represents a sensitive attribute, a classifier $\psi$ satisfies \emph{equal opportunity} if
\[
 P[\psi(X_i) = 1 \mid L_i=1,\, Y_i = 0] = P[\psi(X_i) = 1 \mid L_i=1,\, Y_i = 1].
\]
Here, the probability $P$ is the frequency within the given dataset used as an estimation of the true conditional probability.
In other words, the estimated probability of a positive classification among truly positive examples ($L_i=1$) is the same across the values of the sensitive attribute $Y$ that indicate groups $0,\,1$.  The difference between these true positive rates quantifies the fairness of a given classifier. Instead of the difference, it is common to consider the ratio of the two probabilities. Therefore, we adopt the following fairness metric, which we call the \emph{EO-ratio (EOr)}:
\begin{definition}
Given a labeled dataset $D = (X_i, L_i, Y_i)_{i \in \Jc}$ with the same notation as above, a classifier $\psi$ has an \emph{EO-ratio $\tau$}, where
\[
\tau = \min \left( \frac{P[\psi(X_i) = 1 \mid L_i=1,\, Y_i = 1]}{P[\psi(X_i) = 1 \mid L_i=1,\, Y_i = 0]}, \ \frac{P[\psi(X_i) = 1 \mid L_i=1,\, Y_i = 0]}{P[\psi(X_i) = 1 \mid L_i=1,\, Y_i = 1]} \right)
\]
with $P$ being the frequency within the given dataset.
\end{definition}
The closer $\tau$ is to 1, the fairer is the classifier $\psi$ with respect to the groups $0,1$. Again, the probabilities here can be numerically estimated by the frequencies of the predictions in each group restricted to $L_i=1$. These two definitions are very close to each other  

We propose a risk-averse formulation using contextual risk measures and the mean-semideviation as the outer risk measure. Recall the formulation \eqref{p:first-stage}, the objective has two terms: an expected value of the misclassification risk of each class, and an expected value of the shortfall of the risk of each class and the average risk. The second term, the shortfall, can be viewed as a penalty term with parameter $c\in[0,1]$,  penalizing every class whose risk is higher than the average risk. We can call it a fairness term since it prevents any class from being overlooked and having a risk too much higher than other classes.

The models that we propose is the following. We consider the scenario in which a fair-sensitive categorical feature with values in $\mathcal{Y}={1,\dots, S}$ is present in the data points to be classified. To make the prediction fair with respect to the $S$ groups within all classes, we introduce a contextual risk-measure $\varrho_c[Z]$. Assume that the random classification error is given by $Z=f(\vartheta,X,Y)$, where $\vartheta$ determines the classifier, $X$ stands for a random data point and $Y$ stands for the value of said feature. Let $f$ be measurable, convex and monotonically non-decreasing with respect to the second argument for any value of the first and the third argument. We need consider the classification error $Z^i$ in every context $y\in\mathcal{Y}$ for all classes $i\in\Jc.$
Consider the probability space $(\Omega, \Fc,P)$ where the random vector $Z$ lives and define the spaces $\mathcal{Y}_s=(\Omega,\Fc|Y=y,P(X|Y=y)).$  A contextual risk measure is a composition of coherent measures of risk evaluating the risk given a context and an aggregation measure providing the total risk for all contexts. 
More precisely, we fix risk measures $\rho_{i,s} [Z^i|Y=s]$ for each $i\in\Jc$ and $y\in\mathcal{Y}$ and choose 
$\varrho_c[Z^i]$ as an aggregation measure to obtain the total risk of each class over all contexts. Additional aggregation by an outer measure $\varrho_{\rm o}[Z]$ will aggregate the risk over the classes. 
\begin{proposition}
  Let  $\rho_{i,y} (\cdot)$ be coherent measures of risk for  all  $i\in\Jc$ and all $y\in\mathcal{Y}$, 
risk measures $\varrho_{i}: \Lc_\infty(\mathcal{Y},\Fc_\Yc, P_{i|Y})\to\Rb$ be coherent measures of risk for all $i\in\Jc$, and 
$\varrho_{\rm o}:\Lc_\infty(\Omega_N,\Fc_N, p)\to\Rb$ be coherent as well. 
Denote
\begin{equation*}
	    V_c(i,y) = \rho_{i,y}[Z^i|Y=y] \quad \text{and} \quad W(i) = \varrho_{i}[V_c(i,\cdot)],\quad y\in\mathcal{Y},\; i\in\Jc.
\end{equation*}
 Then the risk measure $\varrho_{\rm sys} [Z]= \varrho_{\rm o} [W]$ satisfies axioms A1-A4 of systemic measure of risk. 
\end{proposition}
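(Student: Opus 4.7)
The plan is to verify each of the four axioms A1--A4 by propagating the corresponding scalar coherent property through the two intermediate layers that build $\varrho_{\rm sys}$. It is convenient to fix notation for the composition: for any $X$ write $V_c^X(i,s) = \rho_{i,s}[X_i|Y=s]$ (a random element on the class--context product index) and $W^X(i) = \varrho_{i}[V_c^X(i,\cdot)]$ (a random variable on $\Omega_N$), so that $\varrho_{\rm sys}[X] = \varrho_{\rm o}[W^X]$. Every map in this chain is a scalar coherent risk measure, and hence satisfies convexity, monotonicity, positive homogeneity, and the scalar translation property. The proof is therefore essentially a careful composition of these preservation properties.

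For A2 (monotonicity) and A3 (positive homogeneity), the chaining is direct. If $X_i\ge Y_i$ $P$-a.s.\ for every $i$, then monotonicity of each $\rho_{i,s}$ gives $V_c^X(i,s)\ge V_c^Y(i,s)$ pointwise, monotonicity of each $\varrho_i$ lifts this to $W^X(i)\ge W^Y(i)$, and a final application of monotonicity of $\varrho_{\rm o}$ yields $\varrho_{\rm sys}[X]\ge\varrho_{\rm sys}[Y]$. For $t>0$, positive homogeneity propagates in exactly the same way to give $V_c^{tX}=tV_c^X$, $W^{tX}=tW^X$, and $\varrho_{\rm sys}[tX]=t\varrho_{\rm sys}[X]$.

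For A1 (convexity), set $Z=\alpha X+(1-\alpha)Y$ with $\alpha\in(0,1)$. Convexity of each $\rho_{i,s}$ gives the pointwise bound $V_c^Z(i,s)\le \alpha V_c^X(i,s)+(1-\alpha)V_c^Y(i,s)$; applying the monotone $\varrho_i$ and then its convexity yields $W^Z(i)\le \alpha W^X(i)+(1-\alpha)W^Y(i)$; one more use of monotonicity and convexity of $\varrho_{\rm o}$ closes the desired inequality. The key observation is that convexity at each layer is not enough on its own: monotonicity of the outer two layers is what allows the pointwise convex bound to be promoted through the composition.

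Axiom A4 (translation equivariance) is the most delicate step and is where I expect the main bookkeeping subtlety, since A4 is phrased for vector inputs as $\varrho[X+a\one]=\varrho[X]+a\varrho[\one]$ whereas each inner layer is a \emph{scalar} coherent measure satisfying the simpler identity $\rho[Z+a]=\rho[Z]+a$. I would first chain the scalar translation through the two inner layers to obtain $V_c^{X+a\one}(i,s)=V_c^X(i,s)+a$ and then $W^{X+a\one}=W^X+a$ as a random variable on $\Omega_N$; applying $\varrho_{\rm o}$ and its scalar translation gives $\varrho_{\rm sys}[X+a\one]=\varrho_{\rm sys}[X]+a$. To match A4 in the form stated, I then compute $\varrho_{\rm sys}[\one]$: positive homogeneity plus translation at each scalar layer force $\rho_{i,s}[1]=1$ and $\varrho_i[1]=1$, so $W^{\one}$ is the constant random variable $1$ on $\Omega_N$ and $\varrho_{\rm sys}[\one]=\varrho_{\rm o}[1]=1$. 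Hence $a=a\varrho_{\rm sys}[\one]$ and A4 closes. The only conceptual point worth stressing is that the ``$\one$'' appearing at each layer is in each case the constant random variable equal to $1$ on the relevant probability space, which is precisely why the scalar translation property suffices at every stage.
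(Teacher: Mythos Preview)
Your proof is correct and follows essentially the same approach as the paper's: each axiom is verified by chaining the corresponding coherent-measure property through the three layers $\rho_{i,s}\to\varrho_i\to\varrho_{\rm o}$, using monotonicity of the outer layers to promote the pointwise convexity bound in A1. Your treatment of A4 is in fact slightly more complete than the paper's, since you explicitly compute $\varrho_{\rm sys}[\one]=1$ to match the stated form $\varrho[X+a\one]=\varrho[X]+a\varrho[\one]$, whereas the paper stops at $\varrho_{\rm o}[W+a]=\varrho_{\rm o}[W]+a$.
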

\begin{proof}
  (i) Given any $Z_1$, $Z_2$ and $\alpha \in (0,1)$, we consider the random vector $Z_3 = \alpha Z_1 + (1-\alpha)Z_2$.  It follows that
\[
V_c^3(i,y) = \rho_{i,y}[Z^i_3|Y=y] \leq \alpha\rho_{i,y}[Z^i_1|Y=y] + (1-\alpha)\rho_{i,y}[Z^i_2|Y=y] = \alpha V_c^1(i,y)+ (1-\alpha) V_c^2(i,y).
\]
by the convexity of $\rho_{i,y}[\cdot]$ for all $i \in\Jc$ and all $y\in\mathcal{Y}$.  Hence
\[
W^3(i) =  \varrho_{i}[V_c^3 (i,\cdot)] \leq \alpha \varrho_i[V_c^1(i,\cdot)]+ (1-\alpha) \varrho_i[V_c^2(i,\cdot)] =
\alpha W^1(i)+ (1-\alpha) W^2(i).
\]
by the same arguments. Analogously,  
\[
\varrho_{\rm sys}[Z_3]= \varrho_{\rm o} [W^3]\leq \alpha \varrho_{\rm o} [W^1]+ (1-\alpha) \varrho_{\rm o} [W^2] =
\alpha \varrho_{\rm sys}[Z_1] +(1-\alpha) \varrho_{\rm sys}[Z_2], 
\] 
which establishes the convexity property. 

(ii) Suppose the vectors $Z_1, Z_2 $ satisfy $Z_1\leq Z_2$ a.s. This implies that $Z_1^i \leq Z_2^i$ a.s. and for all $y\in \mathcal{Y}$. Using the monotonicity of the risk measures $\rho_{i,y}[\cdot]$, $\varrho_i[\cdot]$ and $\varrho_{\rm o}[\cdot]$, we infer the monotonicity of $\varrho_{\rm sys}[\cdot].$  

 (iii) The positive homogeneity follows in a straightforward manner from the definition. 

	    (iv) Given a random vector $Z$ and a real constant $a$, we calculate the risk $V^+_c(i,y)$ of the translated error in context $y$  as follows:
     \[
     V^+_c(i,y) = \rho_{i,y}[Z^i +a|Y=y] = \rho_{i,y}[Z^i|Y=y] +a = V_c(i,y) + a \quad \text{for all } y\in\mathcal{Y}, \; i\in\Jc.
     \]
    The second equality holds by the translation property of the coherent measures of risk. This implies that
   $W^+(i) =  \varrho_{i}[V_c^+ (i,\cdot)] = \varrho_{i}[V_c (i,\cdot) + a] = W(i) +a $ by the same argument. Hence,  
	    $\varrho_{\rm o}[W+a]= \varrho_{\rm o}[W] + a,$ concluding that property (A4) holds as well.
\end{proof}  

This structure of measuring risk allows us to enforce fairness within each class by choosing the aggregator measures $\varrho_i$ to be such that deviation from the average risk or excessive risk above a certain quantile is penalized.

Consider as an example the case of two classes $A$ and $B$, both with a categorical feature $Y$ with values $\mathcal {Y} = \{1,2\}$.  
We could calculate the risk of the groups $ \varrho_{A_i}$ and $\varrho_{B_i}$, $i=1,2$, which would be the measures $\rho_{i,y} $ with $i=A,B$ and  $y=1,2.$ In the special case of the upper mean-semideviation of order 1 for the $\varrho_i$, we obtain the following form:
\begin{gather*}
    \varrho_A = \sum_{i=1,2} p_{A_i}\varrho_{A_i} + \kappa p_{A_1}p_{A_2} |\varrho_{A_1} - \varrho_{A_2}|,\\
    \varrho_B = \sum_{i=1,2} p_{B_i}\varrho_{B_i} + \kappa p_{B_1}p_{B_2} |\varrho_{B_1} - \varrho_{B_2}|,
\end{gather*}
where $\kappa\in (0,1)$ and $p_{A_i}$ is the (estimated) conditional probability of a data point to be in class $A$ with feature $Y=i,$ $i=1,2.$
In that case, we may consider a risk-neutral evaluation for the outer risk measure $\varrho_{\rm o}$, i.e. 
$\varrho_{\rm o} = p_A\varrho_A + p_B\varrho_B. $ We shall name classifiers like this \textbf{CNACR} meaning classification with nonlinear aggregation of contextual risks.

We note that the upper mean-semideviation of any order would penalize excessive risk above the average and it is suitable as a class aggregator measure enforcing fairness. 
The measures represented in \eqref{e:avar-comb} penalize excessive risk above a the quantile $\alpha$-quantile of the risk distribution and are also suitable as class aggregator.  Additionally, any convex combinations or maximum of those measures of risk will facilitate used as class aggregators will facilitate the design of a fair and robust classifier.

We still can use our numerical method for this model;  we only need change the way we calculate the subgradients and we can apply the regularized decomposition method to solve the modified fairness classification model. We test our ideas via numerical experiments and report them in the next section.

To put our proposal in the context of the existing approaches, we note that
our formulation forces fairness between the classes within the learning model. In this category, we have seen models that directly modify the loss function, such as \cite{rychener2022metrizing}, which uses a penalty term on the distance between the distributions of predictions within the two groups. Other works use extra constraints to force fairness, e.g. \cite{donini2018empirical}. Our two-stage model naturally involves a 'penalty' term within the formulation as it penalizes the deviation of the risk for each class from the average risk. in this way there is some similarity to \cite{rychener2022metrizing}. However, the application of the two-stage model requires a different training process.
We argue that this is a very attractive property that can force fairness in classification. 
In a way, our approach also involves preprocessing the data before learning because, we split each of the original classes into multiple classes and then learn the data by training our two-stage model. 

\section{Numerical Experiments}

In this section, we demonstrate the performance of the proposed models and methods with respect to robustness and fairness. In our first set of experiments, we use the well-known MNIST dataset to illustrate how the risk measures can affect the behavior of a classification model on noisy and mislabeled data sets. The second set of experiments show the viability of the developed risk-averse kernel formulation based on a data set that is known to be non-linearly separable for detecting Electrical Faults, classification data from \cite{ElectricalFaultDetection2024}. Our third set of experiments uses the drug usage dataset \cite{Dua2019} to present the potential of the proposed risk-averse classification method to enforce fairness. 

For the first set of experiments, we try to gain a better insight as to when the risk-averse methods provide more robustness. To this end, we devise several perturbations and apply a few different kinds of noises to the training data. We emphasize that the test data are left untouched.
Under every kind of perturbation, we compare the risk-neutral method \eqref{eq:cs-rn-obj}--\eqref{eq:cs-rn-nonneg}, which we use as a benchmark and the risk-averse method \eqref{eq:cs-ra}.  For the first few experiments, we only use the samples from 3 out of 10 classes to maintain a low-risk situation. We run through all possible selections and compare the two methods under the same setting to ensure that the comparisons are thorough and do not depend on chance.  We also conduct experiments using all 10 classes and we observe that the performance of the risk-averse method gets even better with a larger number of classes. While a mathematical proof of this phenomenon is a subject of further research, one intuitive reason might be that downward bias might not be present or less pronounced when optimizing coherent measures of risk. It is known that the Sample Average Approximation (SAA) has a downward bias, which in our context results in underestimating the real expected classification error. However, the minimized empirical estimators of some risk measures do not necessarily have a downward bias. Analysis of the bias in data-driven risk-averse optimization problems is still an open research question. 

Note, we only use up to 1000 samples from each class. This strengthens our arguments because we observe better results with the risk-averse model, which demonstrates that the risk-averse method is a better option when only limited amount of data is available or acquiring of more data is expensive.

For the risk-averse method, we adopt the upper mean-semi-deviation measure in the second-stage and the expectation as an aggregator in the first stage problem. 

Two sets of parameters need to be defined ion that problem.: the probability mass function $c$ and the weight $\kappa_i$ of the semi-deviation in the calculation of the risk measures.  For simplicity, we assume that $c$ is the uniform measure on $\Omega_N$ and $\kappa_i$ of all the classes are set as the same number between 0 and 1. These parameters can be better adjusted if needed and warranted in each specific problem.  The MNIST dataset is very balanced, which requires less adjustment of these parameters. While we do agree that navigating through different selections of these parameters will potentially create better results, our main focus here is to compare the robustness that the risk measures can provide.

\subsection{Mislabeled data}
\label{sub:mislabeled_data}

We introduce mislabeling into the training sets by randomly selecting $10\%$ data from each class, and then randomly inserting them into other classes without changing the original size of any class. We use 3  out of the 10 classes from the MNIST dataset and 1000 samples from each class in each experiment reported here.  The risk-averse and the benchmark model are run 100 times. Each time, we divide the selected data into training set and test set with a 0.3 test rate, and we use the same training set and test set for both models for a fair comparison. We report the F1 score of each class and the average F1 score of each run and then compare their distribution. We also calculate and report the risk statistics during training and testing. 

The parameter $\kappa\in[0,1]$ of the mean-semideviation determines how risk-averse the measure is; we call $\kappa$ the risk level. We make experiments with different values of the risk level but all classes use the same risk level. 

\begin{figure}[h!]
\centering
\includegraphics[width=0.95\linewidth]{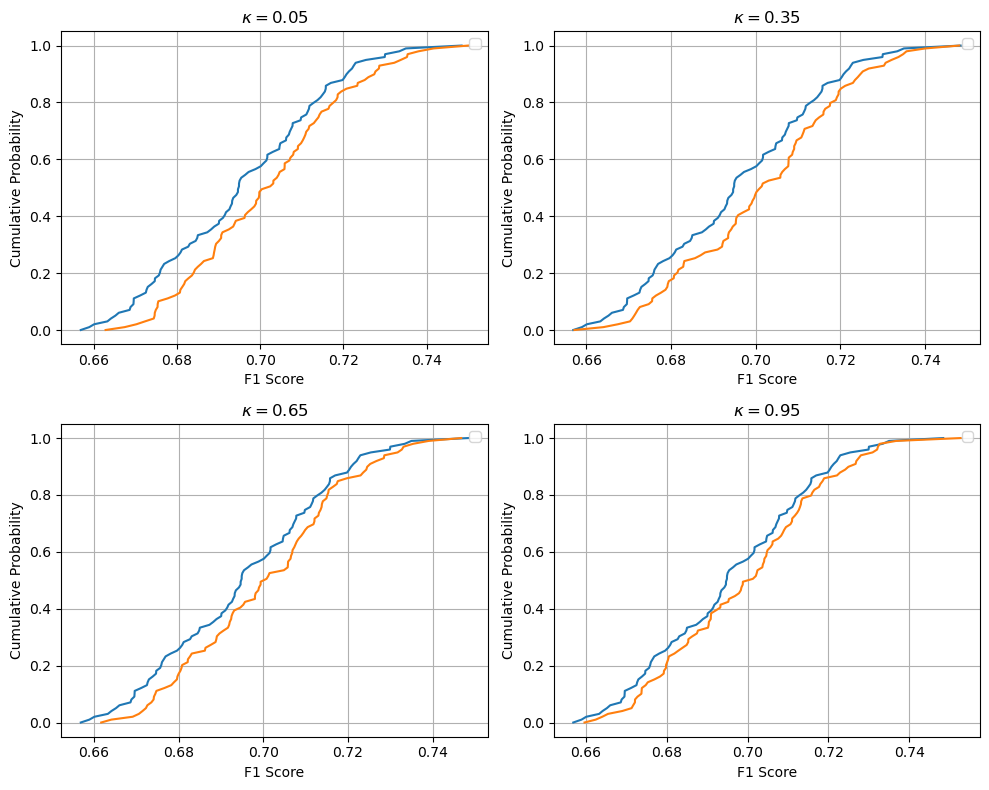}
\caption{CDF of the average F1 scores comparing the risk-neutral baseline (blue) and the risk-averse method (orange) with different risk levels $\kappa$ using mislabeled data.}
\label{fig:f1-mislabel}
\end{figure}

In Fig.\ref{fig:f1-mislabel}, we show the results using 4 different risk levels.  Recall that probability distributions can be compared by using stochastic orders. The first order dominance compares the associated distribution functions
A random variable with cumulative distributions function $F$ is \emph{stochastically larger} than a random variable with distribution function $G$  with respect to the first-order stochastic dominance if $F(\eta) \leq G(\eta)$  for all $\eta\in\Rb$. 

The plots in Fig.\ref{fig:f1-mislabel} show that with any selection of the risk level, the risk-averse method results in significantly higher F1 score than the original Crammer-Singer method does because we observe almost first-order stochastic dominance between the respective distribution functions. Moreover, we see that the choice of risk level plays a substantial role, as a smaller risk level clearly provides better results. However, this risk level cannot be too close to 0, as we have also tested. A risk level of 0.01 would provide a result with no significant difference from the risk-neutral model.

Though not reported here, we have also inspected the F1 score for each class during each run, and we observe CDF plots with a similar pattern as shown in Fig.\ref{fig:f1-mislabel}. This means that the introduction of risk measures makes the model significantly more immune to the risky environment created by mislabeled data. Additionally, we have conducted a pairwise t-test on the average F1 score between the baseline method and the risk-averse method testing the hypothesis that the F1 scores are the same. Even for the worst situation in the Fig.\ref{fig:f1-mislabel} with $\kappa=0.95$, we still reject this hypothesis with a $-6.16$ t-statistic and a $1.54\times 10^{-8}$ p-value.  

\begin{table}[h!]
  \centering
  \small
  \begin{tabular}{clcccccc}
    \toprule
    & Error& \multicolumn{3}{c}{Risk-averse} & \multicolumn{3}{c}{Baseline} \\
    \cmidrule(r){3-5} \cmidrule(l){6-8}
    & statistics & Class 1 & Class 2 & Class 3 & Class 1 & Class 2 & Class 3 \\
    \midrule
    \multirow{4}{*}{$\kappa = 0.05$}
    & Train Exp Val        & 0.221881 & 0.129303 & 0.246989 & 0.218315 & 0.121311 & 0.240600 \\
    & Train MSD ($\kappa=1$) & 0.417047 & 0.240529 & 0.461556 & 0.418637 & 0.232943 & 0.459031 \\
    & Test Exp Val         & 1.924157 & 0.935457 & 2.922738 & 2.281225 & 1.098725 & 3.560530 \\
    & Test MSD ($\kappa=1$)  & 3.238615 & 1.598175 & 4.833257 & 3.848408 & 1.886506 & 5.910646 \\
    \midrule
    \multirow{4}{*}{$\kappa = 0.35$}
    & Train Exp Val        & 0.222637 & 0.128429 & 0.247221 & 0.218315 & 0.121311 & 0.240600 \\
    & Train MSD ($\kappa=1$) & 0.424351 & 0.244557 & 0.468724 & 0.418637 & 0.232943 & 0.459031 \\
    & Test Exp Val         & 1.896190 & 0.919227 & 2.863517 & 2.281225 & 1.098725 & 3.560530 \\
    & Test MSD ($\kappa=1$)  & 3.193151 & 1.570841 & 4.732428 & 3.848408 & 1.886506 & 5.910646 \\
    \midrule
    \multirow{4}{*}{$\kappa = 0.65$}
    & Train Exp Val        & 0.222310 & 0.132216 & 0.247531 & 0.218315 & 0.121311 & 0.240600 \\
    & Train MSD ($\kappa=1$) & 0.412122 & 0.240425 & 0.456226 & 0.418637 & 0.232943 & 0.459031 \\
    & Test Exp Val         & 1.946301 & 0.947653 & 2.972493 & 2.281225 & 1.098725 & 3.560530 \\
    & Test MSD ($\kappa=1$)  & 3.274515 & 1.617666 & 4.919666 & 3.848408 & 1.886506 & 5.910646 \\
    \midrule
    \multirow{4}{*}{$\kappa = 0.95$}
    & Train Exp Val        & 0.223581 & 0.136028 & 0.248951 & 0.218315 & 0.121311 & 0.240600 \\
    & Train MSD ($\kappa=1$) & 0.408529 & 0.242401 & 0.452615 & 0.418637 & 0.232943 & 0.459031 \\
    & Test Exp Val         & 1.963673 & 0.959721 & 3.015177 & 2.281225 & 1.098725 & 3.560530 \\
    & Test MSD ($\kappa=1$)  & 3.303290 & 1.637332 & 4.989961 & 3.848408 & 1.886506 & 5.910646 \\
    \bottomrule
  \end{tabular}
  \caption{Risk table of the experiments with mislabeled data. For comparison, we calculate the risk measures using these values with the same risk levels, $\kappa=0$ for the expectation and $\kappa=1$ for the mean-semideviation (MSD). }
  \label{tab:risk-mislabel}
\end{table}

We also calculate and report the two statistics for each model on the training and the test sets, which we report in Table.\ref{tab:risk-mislabel}.  We obtain the optimal classifier for each class and calculate the risk measures for the training and the test data. For better comparison, we calculate the values based on the same risk levels: $\kappa=0$ and $\kappa=1$. 
We observe that the baseline method provides smaller values in the training risk, yet the risk values on the test data are significantly larger than the results from the risk-averse method. This indicates that, once trained, \emph{the risk-averse method has better generalization toward unknown data.}

We point out that, even though the experiments are conducted on only 3 classes from the MNIST dataset, we have run the same experiments on all possible selection of 3 classes to make sure that the result does not occur by chance. All results can be reproduced, and one can easily see that the results show a similar pattern as we have reported.

\subsection{Data with removed features}

In our second experiment, we consider the situation when the amount of information is very limited. To this end, we only use 150 samples to run the experiment with a test rate of $0.5$.
We have conducted the experiments on 1000 samples from 6 classes. For each image, we randomly make 90\% of the pixels unavailable. While this proportion appears large, we observed that the removal of features actually does not have a huge influence on the classification result. Even with half of the features removed, both methods can still achieve an average 0.9 F1 score in almost every class, which means that the situation is not really of high risk given the number of observations. When the risk is not high enough, there will be no significant difference between the risk-averse method and the risk-neutral one.

\begin{figure}[h!]
\centering
\includegraphics[width=0.95\linewidth]{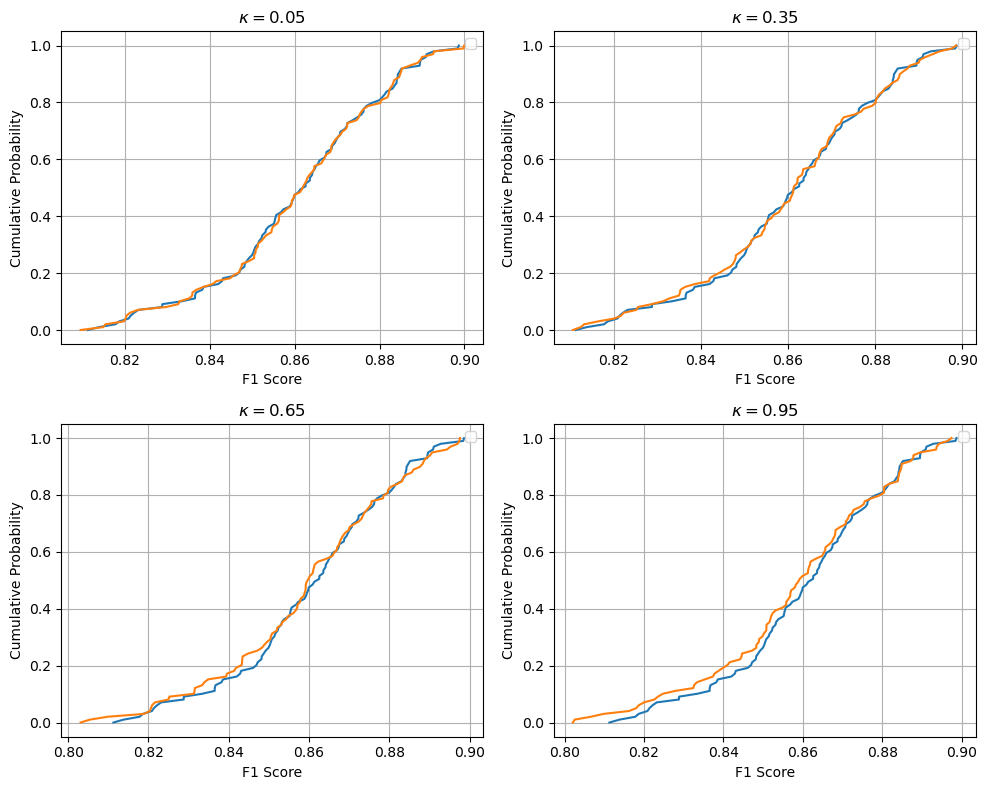}
\caption{CDF of the average F1 scores comparison of the risk-neutral baseline (blue) and the risk-averse method (orange) with different risk levels $\kappa$ using feature-removed data.}
\label{fig:f1-remove}
\end{figure}

According to Fig.\ref{fig:f1-remove}, we notice that when the parameter $\kappa$ has high value, the risk-neutral method actually outperforms the risk-averse one. This suggests that too high risk-aversion  has a negative effect. We can indeed observe that when the risk level decreases to 0.35, then a part of the risk-averse model's CDF falls significantly beneath the baseline's CDF, even though the t-test is still showing that the risk-neutral method might be better. When $\kappa=0.05$, the t-test gives the result with a $-5$ t-statistics and a $2.44\times 10^{-6}$ p-value, indicating that the risk-averse method is significantly better than the risk-neutral method concerning the average F1 score.

\begin{table}[h!]
  \centering
  \small
  \begin{tabular}{clcccccc}
    \toprule
    & Error & \multicolumn{6}{c}{Class} \\
    \cmidrule(r){3-8}
    & Statistics & 1 & 2 & 3 & 4 & 5 & 6 \\
    \midrule
    \multirow{8}{*}{$\kappa = 0.05$} 
    & Train Exp Val & \begin{tabular}{@{}c@{}}0.1522 \\ \raisebox{1.5ex}{\tiny$\big(0.1536\big)$}\end{tabular} & \begin{tabular}{@{}c@{}}0.1165 \\ \raisebox{1.5ex}{\tiny$\big(0.1170\big)$}\end{tabular} & \begin{tabular}{@{}c@{}}0.3497 \\ \raisebox{1.5ex}{\tiny$\big(0.3496\big)$}\end{tabular} & \begin{tabular}{@{}c@{}}0.4340 \\ \raisebox{1.5ex}{\tiny$\big(0.4325\big)$}\end{tabular} & \begin{tabular}{@{}c@{}}0.2023 \\ \raisebox{1.5ex}{\tiny$\big(0.2035\big)$}\end{tabular} & \begin{tabular}{@{}c@{}}0.4829 \\ \raisebox{1.5ex}{\tiny$\big(0.4792\big)$}\end{tabular} \\
    & Train MSD ($\kappa=1$) & \begin{tabular}{@{}c@{}}0.2819 \\ \raisebox{1.5ex}{\tiny$\big(0.2846\big)$}\end{tabular} & \begin{tabular}{@{}c@{}}0.2173 \\ \raisebox{1.5ex}{\tiny$\big(0.2186\big)$}\end{tabular} & \begin{tabular}{@{}c@{}}0.6028 \\ \raisebox{1.5ex}{\tiny$\big(0.6043\big)$}\end{tabular} & \begin{tabular}{@{}c@{}}0.7133 \\ \raisebox{1.5ex}{\tiny$\big(0.7134\big)$}\end{tabular} & \begin{tabular}{@{}c@{}}0.3660 \\ \raisebox{1.5ex}{\tiny$\big(0.3687\big)$}\end{tabular} & \begin{tabular}{@{}c@{}}0.7740 \\ \raisebox{1.5ex}{\tiny$\big(0.7715\big)$}\end{tabular} \\
    & Test Exp Val & \begin{tabular}{@{}c@{}}0.2075 \\ \raisebox{1.5ex}{\tiny$\big(0.2128\big)$}\end{tabular} & \begin{tabular}{@{}c@{}}0.1653 \\ \raisebox{1.5ex}{\tiny$\big(0.1817\big)$}\end{tabular} & \begin{tabular}{@{}c@{}}0.4583 \\ \raisebox{1.5ex}{\tiny$\big(0.4691\big)$}\end{tabular} & \begin{tabular}{@{}c@{}}0.5332 \\ \raisebox{1.5ex}{\tiny$\big(0.5359\big)$}\end{tabular} & \begin{tabular}{@{}c@{}}0.2860 \\ \raisebox{1.5ex}{\tiny$\big(0.2945\big)$}\end{tabular} & \begin{tabular}{@{}c@{}}0.5815 \\ \raisebox{1.5ex}{\tiny$\big(0.5837\big)$}\end{tabular} \\
    & Test MSD ($\kappa=1$) & \begin{tabular}{@{}c@{}}0.3786 \\ \raisebox{1.5ex}{\tiny$\big(0.3884\big)$}\end{tabular} & \begin{tabular}{@{}c@{}}0.3062 \\ \raisebox{1.5ex}{\tiny$\big(0.3376\big)$}\end{tabular} & \begin{tabular}{@{}c@{}}0.7718 \\ \raisebox{1.5ex}{\tiny$\big(0.7920\big)$}\end{tabular} & \begin{tabular}{@{}c@{}}0.8601 \\ \raisebox{1.5ex}{\tiny$\big(0.8668\big)$}\end{tabular} & \begin{tabular}{@{}c@{}}0.5098 \\ \raisebox{1.5ex}{\tiny$\big(0.5257\big)$}\end{tabular} & \begin{tabular}{@{}c@{}}0.9128 \\ \raisebox{1.5ex}{\tiny$\big(0.9194\big)$}\end{tabular} \\
    \midrule
    \multirow{8}{*}{$\kappa = 0.35$} 
    & Train Exp Val & \begin{tabular}{@{}c@{}}0.1455 \\ \raisebox{1.5ex}{\tiny$\big(0.1536\big)$}\end{tabular} & \begin{tabular}{@{}c@{}}0.1123 \\ \raisebox{1.5ex}{\tiny$\big(0.1170\big)$}\end{tabular} & \begin{tabular}{@{}c@{}}0.3495 \\ \raisebox{1.5ex}{\tiny$\big(0.3496\big)$}\end{tabular} & \begin{tabular}{@{}c@{}}0.4394 \\ \raisebox{1.5ex}{\tiny$\big(0.4325\big)$}\end{tabular} & \begin{tabular}{@{}c@{}}0.1966 \\ \raisebox{1.5ex}{\tiny$\big(0.2035\big)$}\end{tabular} & \begin{tabular}{@{}c@{}}0.5027 \\ \raisebox{1.5ex}{\tiny$\big(0.4792\big)$}\end{tabular} \\
    & Train MSD ($\kappa=1$) & \begin{tabular}{@{}c@{}}0.2681 \\ \raisebox{1.5ex}{\tiny$\big(0.2846\big)$}\end{tabular} & \begin{tabular}{@{}c@{}}0.2086 \\ \raisebox{1.5ex}{\tiny$\big(0.2186\big)$}\end{tabular} & \begin{tabular}{@{}c@{}}0.5950 \\ \raisebox{1.5ex}{\tiny$\big(0.6043\big)$}\end{tabular} & \begin{tabular}{@{}c@{}}0.7094 \\ \raisebox{1.5ex}{\tiny$\big(0.7134\big)$}\end{tabular} & \begin{tabular}{@{}c@{}}0.3530 \\ \raisebox{1.5ex}{\tiny$\big(0.3687\big)$}\end{tabular} & \begin{tabular}{@{}c@{}}0.7867 \\ \raisebox{1.5ex}{\tiny$\big(0.7715\big)$}\end{tabular} \\
    & Test Exp Val & \begin{tabular}{@{}c@{}}0.1982 \\ \raisebox{1.5ex}{\tiny$\big(0.2128\big)$}\end{tabular} & \begin{tabular}{@{}c@{}}0.1608 \\ \raisebox{1.5ex}{\tiny$\big(0.1817\big)$}\end{tabular} & \begin{tabular}{@{}c@{}}0.4527 \\ \raisebox{1.5ex}{\tiny$\big(0.4691\big)$}\end{tabular} & \begin{tabular}{@{}c@{}}0.5323 \\ \raisebox{1.5ex}{\tiny$\big(0.5359\big)$}\end{tabular} & \begin{tabular}{@{}c@{}}0.2764 \\ \raisebox{1.5ex}{\tiny$\big(0.2945\big)$}\end{tabular} & \begin{tabular}{@{}c@{}}0.5942 \\ \raisebox{1.5ex}{\tiny$\big(0.5837\big)$}\end{tabular} \\
    & Test MSD ($\kappa=1$) & \begin{tabular}{@{}c@{}}0.3612 \\ \raisebox{1.5ex}{\tiny$\big(0.3884\big)$}\end{tabular} & \begin{tabular}{@{}c@{}}0.2973 \\ \raisebox{1.5ex}{\tiny$\big(0.3376\big)$}\end{tabular} & \begin{tabular}{@{}c@{}}0.7562 \\ \raisebox{1.5ex}{\tiny$\big(0.7920\big)$}\end{tabular} & \begin{tabular}{@{}c@{}}0.8479 \\ \raisebox{1.5ex}{\tiny$\big(0.8668\big)$}\end{tabular} & \begin{tabular}{@{}c@{}}0.4912 \\ \raisebox{1.5ex}{\tiny$\big(0.5257\big)$}\end{tabular} & \begin{tabular}{@{}c@{}}0.9163 \\ \raisebox{1.5ex}{\tiny$\big(0.9194\big)$}\end{tabular} \\
    \midrule
    \multirow{8}{*}{$\kappa = 0.65$} 
    & Train Exp Val & \begin{tabular}{@{}c@{}}0.1396 \\ \raisebox{1.5ex}{\tiny$\big(0.1536\big)$}\end{tabular} & \begin{tabular}{@{}c@{}}0.1091 \\ \raisebox{1.5ex}{\tiny$\big(0.1170\big)$}\end{tabular} & \begin{tabular}{@{}c@{}}0.3510 \\ \raisebox{1.5ex}{\tiny$\big(0.3496\big)$}\end{tabular} & \begin{tabular}{@{}c@{}}0.4457 \\ \raisebox{1.5ex}{\tiny$\big(0.4325\big)$}\end{tabular} & \begin{tabular}{@{}c@{}}0.1920 \\ \raisebox{1.5ex}{\tiny$\big(0.2035\big)$}\end{tabular} & \begin{tabular}{@{}c@{}}0.5255 \\ \raisebox{1.5ex}{\tiny$\big(0.4792\big)$}\end{tabular} \\
    & Train MSD ($\kappa=1$) & \begin{tabular}{@{}c@{}}0.2563 \\ \raisebox{1.5ex}{\tiny$\big(0.2846\big)$}\end{tabular} & \begin{tabular}{@{}c@{}}0.2020 \\ \raisebox{1.5ex}{\tiny$\big(0.2186\big)$}\end{tabular} & \begin{tabular}{@{}c@{}}0.5908 \\ \raisebox{1.5ex}{\tiny$\big(0.6043\big)$}\end{tabular} & \begin{tabular}{@{}c@{}}0.7082 \\ \raisebox{1.5ex}{\tiny$\big(0.7134\big)$}\end{tabular} & \begin{tabular}{@{}c@{}}0.3428 \\ \raisebox{1.5ex}{\tiny$\big(0.3687\big)$}\end{tabular} & \begin{tabular}{@{}c@{}}0.8034 \\ \raisebox{1.5ex}{\tiny$\big(0.7715\big)$}\end{tabular} \\
    & Test Exp Val & \begin{tabular}{@{}c@{}}0.1909 \\ \raisebox{1.5ex}{\tiny$\big(0.2128\big)$}\end{tabular} & \begin{tabular}{@{}c@{}}0.1574 \\ \raisebox{1.5ex}{\tiny$\big(0.1817\big)$}\end{tabular} & \begin{tabular}{@{}c@{}}0.4505 \\ \raisebox{1.5ex}{\tiny$\big(0.4691\big)$}\end{tabular} & \begin{tabular}{@{}c@{}}0.5338 \\ \raisebox{1.5ex}{\tiny$\big(0.5359\big)$}\end{tabular} & \begin{tabular}{@{}c@{}}0.2682 \\ \raisebox{1.5ex}{\tiny$\big(0.2945\big)$}\end{tabular} & \begin{tabular}{@{}c@{}}0.6121 \\ \raisebox{1.5ex}{\tiny$\big(0.5837\big)$}\end{tabular} \\
    & Test MSD ($\kappa=1$) & \begin{tabular}{@{}c@{}}0.3474 \\ \raisebox{1.5ex}{\tiny$\big(0.3884\big)$}\end{tabular} & \begin{tabular}{@{}c@{}}0.2904 \\ \raisebox{1.5ex}{\tiny$\big(0.3376\big)$}\end{tabular} & \begin{tabular}{@{}c@{}}0.7477 \\ \raisebox{1.5ex}{\tiny$\big(0.7920\big)$}\end{tabular} & \begin{tabular}{@{}c@{}}0.8401 \\ \raisebox{1.5ex}{\tiny$\big(0.8668\big)$}\end{tabular} & \begin{tabular}{@{}c@{}}0.4755 \\ \raisebox{1.5ex}{\tiny$\big(0.5257\big)$}\end{tabular} & \begin{tabular}{@{}c@{}}0.9266 \\ \raisebox{1.5ex}{\tiny$\big(0.9194\big)$}\end{tabular} \\
    \midrule
    \multirow{8}{*}{$\kappa = 0.95$} 
    & Train Exp Val & \begin{tabular}{@{}c@{}}0.1349 \\ \raisebox{1.5ex}{\tiny$\big(0.1536\big)$}\end{tabular} & \begin{tabular}{@{}c@{}}0.1065 \\ \raisebox{1.5ex}{\tiny$\big(0.1170\big)$}\end{tabular} & \begin{tabular}{@{}c@{}}0.3532 \\ \raisebox{1.5ex}{\tiny$\big(0.3496\big)$}\end{tabular} & \begin{tabular}{@{}c@{}}0.4539 \\ \raisebox{1.5ex}{\tiny$\big(0.4325\big)$}\end{tabular} & \begin{tabular}{@{}c@{}}0.1882 \\ \raisebox{1.5ex}{\tiny$\big(0.2035\big)$}\end{tabular} & \begin{tabular}{@{}c@{}}0.5505 \\ \raisebox{1.5ex}{\tiny$\big(0.4792\big)$}\end{tabular} \\
    & Train MSD ($\kappa=1$) & \begin{tabular}{@{}c@{}}0.2469 \\ \raisebox{1.5ex}{\tiny$\big(0.2846\big)$}\end{tabular} & \begin{tabular}{@{}c@{}}0.1965 \\ \raisebox{1.5ex}{\tiny$\big(0.2186\big)$}\end{tabular} & \begin{tabular}{@{}c@{}}0.5884 \\ \raisebox{1.5ex}{\tiny$\big(0.6043\big)$}\end{tabular} & \begin{tabular}{@{}c@{}}0.7093 \\ \raisebox{1.5ex}{\tiny$\big(0.7134\big)$}\end{tabular} & \begin{tabular}{@{}c@{}}0.3343 \\ \raisebox{1.5ex}{\tiny$\big(0.3687\big)$}\end{tabular} & \begin{tabular}{@{}c@{}}0.8219 \\ \raisebox{1.5ex}{\tiny$\big(0.7715\big)$}\end{tabular} \\
    & Test Exp Val & \begin{tabular}{@{}c@{}}0.1844 \\ \raisebox{1.5ex}{\tiny$\big(0.2128\big)$}\end{tabular} & \begin{tabular}{@{}c@{}}0.1545 \\ \raisebox{1.5ex}{\tiny$\big(0.1817\big)$}\end{tabular} & \begin{tabular}{@{}c@{}}0.4493 \\ \raisebox{1.5ex}{\tiny$\big(0.4691\big)$}\end{tabular} & \begin{tabular}{@{}c@{}}0.5378 \\ \raisebox{1.5ex}{\tiny$\big(0.5359\big)$}\end{tabular} & \begin{tabular}{@{}c@{}}0.2620 \\ \raisebox{1.5ex}{\tiny$\big(0.2945\big)$}\end{tabular} & \begin{tabular}{@{}c@{}}0.6323 \\ \raisebox{1.5ex}{\tiny$\big(0.5837\big)$}\end{tabular} \\
    & Test MSD ($\kappa=1$) & \begin{tabular}{@{}c@{}}0.3354 \\ \raisebox{1.5ex}{\tiny$\big(0.3884\big)$}\end{tabular} & \begin{tabular}{@{}c@{}}0.2848 \\ \raisebox{1.5ex}{\tiny$\big(0.3376\big)$}\end{tabular} & \begin{tabular}{@{}c@{}}0.7408 \\ \raisebox{1.5ex}{\tiny$\big(0.7920\big)$}\end{tabular} & \begin{tabular}{@{}c@{}}0.8356 \\ \raisebox{1.5ex}{\tiny$\big(0.8668\big)$}\end{tabular} & \begin{tabular}{@{}c@{}}0.4635 \\ \raisebox{1.5ex}{\tiny$\big(0.5257\big)$}\end{tabular} & \begin{tabular}{@{}c@{}}0.9391 \\ \raisebox{1.5ex}{\tiny$\big(0.9194\big)$}\end{tabular} \\
    \bottomrule
  \end{tabular}
  \caption{Risk table of the experiments with 90\% features removed. The numbers in parentheses are associated with the risk-neutral model; the numbers outside come from the risk-averse method.}
  \label{tab:risk-remove}
\end{table}

Table.\ref{tab:risk-remove}, again shows that the risk-averse method performs significantly better, especially on the out-of-sample test data. It is also interesting to see that, the risk of some of the classes is relatively larger than the rest, namely classes 3, 4, and 6. These classes correspond to numbers 2,3 and 5. This means that these digits are more difficult, in other words, more risky to classify. 

\begin{figure}[h!]
\centering
\includegraphics[width=0.95\linewidth]{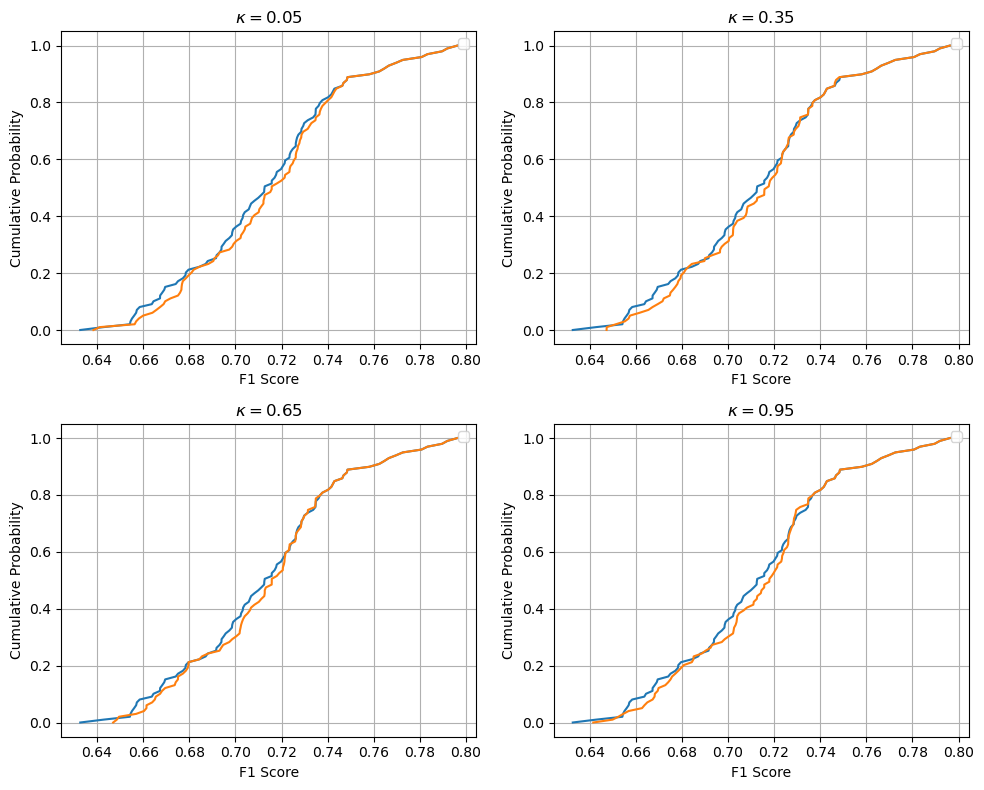}
\caption{CDF of the average F1 scores comparing the risk-neutral baseline (blue) and the risk-averse method (orange) with different risk levels $\kappa$ using 6 classes of small-sized samples of feature-removed data.}
\label{fig:f1-6small-remove}
\end{figure}

\subsection{Size-limited data with removed features}

In the third set of experiments, we randomly select only 150 samples for each experiment, and then randomly remove $90\%$ of the features. Moreover, we increase the test ratio to 0.5, which means only 75 samples are training samples. Hence, the methods have very limited information from each image. Even though we are surprised to see that both methods are still able to classify the images under such an extreme situation, the results are definitely far from good, which indicates the existence of risk that lack of information creates.
In Fig.\ref{fig:f1-6small-remove}, it is obvious that the difference between the two models is much more significant than the difference shown in Fig.\ref{fig:f1-remove}. This indicates that, compared to the risk-neutral model, the risk-averse model has better performance when very small amount of data is available.

\begin{figure}[h!]
\centering
\includegraphics[width=0.95\linewidth]{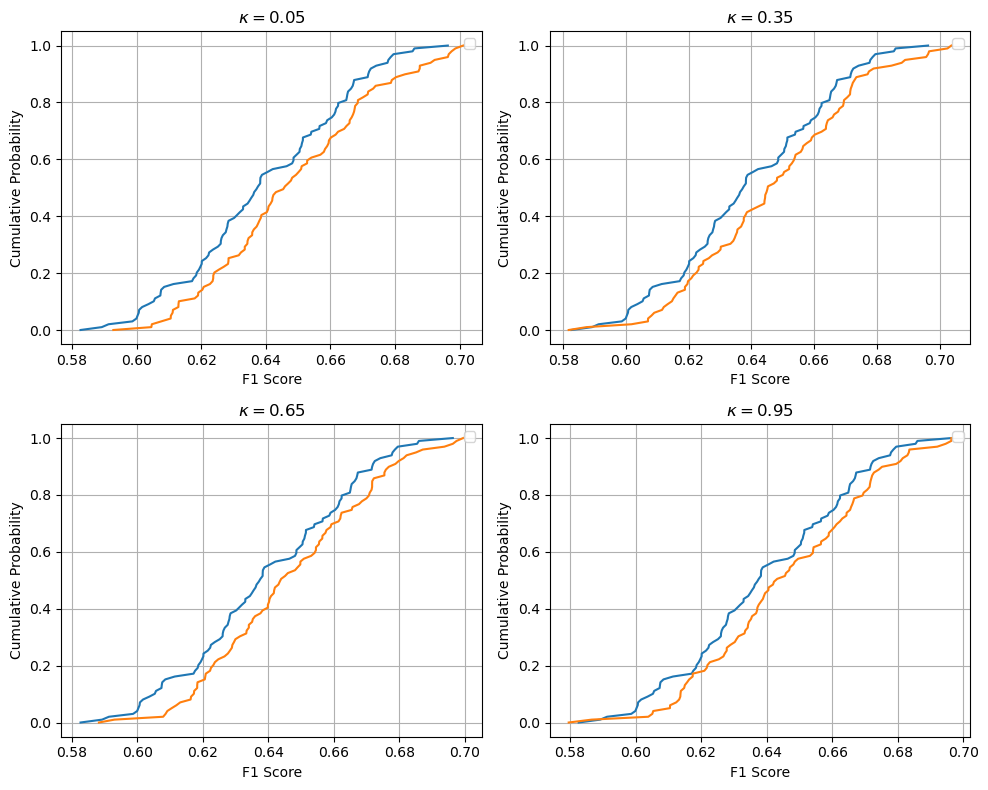}
\caption{CDF of the average F1 scores comparing the risk-neutral baseline (blue) and the risk-averse method (orange) with different risk levels $\kappa$ using 10 classes of small-sized samples of feature-removed data.}
\label{fig:f1-10small-remove}
\end{figure}

\begin{table}[h!]
  \centering
  \small
  \begin{tabular}{cl*{10}{p{0.4cm}}}
    \toprule
    & Error & \multicolumn{10}{c}{Class} \\
    \cmidrule(r){3-12}
    & Statistics & 1 & 2 & 3 & 4 & 5 & 6 & 7 & 8 & 9 & 10 \\
    \midrule
    \multirow{8}{*}{$\kappa = 0.05$} 
    & Train Exp Val & \begin{tabular}{@{}c@{}}0.05 \\ \raisebox{1.5ex}{\tiny$(0.04)$}\end{tabular} & \begin{tabular}{@{}c@{}}0.03 \\ \raisebox{1.5ex}{\tiny$(0.02)$}\end{tabular} & \begin{tabular}{@{}c@{}}0.15 \\ \raisebox{1.5ex}{\tiny$(0.14)$}\end{tabular} & \begin{tabular}{@{}c@{}}0.22 \\ \raisebox{1.5ex}{\tiny$(0.21)$}\end{tabular} & \begin{tabular}{@{}c@{}}0.17 \\ \raisebox{1.5ex}{\tiny$(0.16)$}\end{tabular} & \begin{tabular}{@{}c@{}}0.29 \\ \raisebox{1.5ex}{\tiny$(0.26)$}\end{tabular} & \begin{tabular}{@{}c@{}}0.05 \\ \raisebox{1.5ex}{\tiny$(0.04)$}\end{tabular} & \begin{tabular}{@{}c@{}}0.14 \\ \raisebox{1.5ex}{\tiny$(0.13)$}\end{tabular} & \begin{tabular}{@{}c@{}}0.23 \\ \raisebox{1.5ex}{\tiny$(0.21)$}\end{tabular} & \begin{tabular}{@{}c@{}}0.25 \\ \raisebox{1.5ex}{\tiny$(0.23)$}\end{tabular} \\
    & Train MSD ($\kappa=1$) & \begin{tabular}{@{}c@{}}0.09 \\ \raisebox{1.5ex}{\tiny$(0.08)$}\end{tabular} & \begin{tabular}{@{}c@{}}0.05 \\ \raisebox{1.5ex}{\tiny$(0.04)$}\end{tabular} & \begin{tabular}{@{}c@{}}0.28 \\ \raisebox{1.5ex}{\tiny$(0.26)$}\end{tabular} & \begin{tabular}{@{}c@{}}0.39 \\ \raisebox{1.5ex}{\tiny$(0.38)$}\end{tabular} & \begin{tabular}{@{}c@{}}0.31 \\ \raisebox{1.5ex}{\tiny$(0.30)$}\end{tabular} & \begin{tabular}{@{}c@{}}0.50 \\ \raisebox{1.5ex}{\tiny$(0.46)$}\end{tabular} & \begin{tabular}{@{}c@{}}0.09 \\ \raisebox{1.5ex}{\tiny$(0.08)$}\end{tabular} & \begin{tabular}{@{}c@{}}0.26 \\ \raisebox{1.5ex}{\tiny$(0.24)$}\end{tabular} & \begin{tabular}{@{}c@{}}0.42 \\ \raisebox{1.5ex}{\tiny$(0.38)$}\end{tabular} & \begin{tabular}{@{}c@{}}0.45 \\ \raisebox{1.5ex}{\tiny$(0.41)$}\end{tabular} \\
    & Test Exp Val & \begin{tabular}{@{}c@{}}2.33 \\ \raisebox{1.5ex}{\tiny$(3.99)$}\end{tabular} & \begin{tabular}{@{}c@{}}1.93 \\ \raisebox{1.5ex}{\tiny$(3.20)$}\end{tabular} & \begin{tabular}{@{}c@{}}4.93 \\ \raisebox{1.5ex}{\tiny$(8.23)$}\end{tabular} & \begin{tabular}{@{}c@{}}3.95 \\ \raisebox{1.5ex}{\tiny$(6.31)$}\end{tabular} & \begin{tabular}{@{}c@{}}3.98 \\ \raisebox{1.5ex}{\tiny$(6.82)$}\end{tabular} & \begin{tabular}{@{}c@{}}4.29 \\ \raisebox{1.5ex}{\tiny$(6.80)$}\end{tabular} & \begin{tabular}{@{}c@{}}3.14 \\ \raisebox{1.5ex}{\tiny$(5.44)$}\end{tabular} & \begin{tabular}{@{}c@{}}3.12 \\ \raisebox{1.5ex}{\tiny$(5.10)$}\end{tabular} & \begin{tabular}{@{}c@{}}3.37 \\ \raisebox{1.5ex}{\tiny$(5.29)$}\end{tabular} & \begin{tabular}{@{}c@{}}3.29 \\ \raisebox{1.5ex}{\tiny$(5.28)$}\end{tabular} \\
    & Test MSD ($\kappa=1$) & \begin{tabular}{@{}c@{}}4.09 \\ \raisebox{1.5ex}{\tiny$(7.04)$}\end{tabular} & \begin{tabular}{@{}c@{}}3.53 \\ \raisebox{1.5ex}{\tiny$(5.86)$}\end{tabular} & \begin{tabular}{@{}c@{}}8.10 \\ \raisebox{1.5ex}{\tiny$(13.63)$}\end{tabular} & \begin{tabular}{@{}c@{}}6.40 \\ \raisebox{1.5ex}{\tiny$(10.35)$}\end{tabular} & \begin{tabular}{@{}c@{}}6.48 \\ \raisebox{1.5ex}{\tiny$(11.28)$}\end{tabular} & \begin{tabular}{@{}c@{}}6.70 \\ \raisebox{1.5ex}{\tiny$(10.79)$}\end{tabular} & \begin{tabular}{@{}c@{}}5.47 \\ \raisebox{1.5ex}{\tiny$(9.52)$}\end{tabular} & \begin{tabular}{@{}c@{}}5.28 \\ \raisebox{1.5ex}{\tiny$(8.72)$}\end{tabular} & \begin{tabular}{@{}c@{}}5.46 \\ \raisebox{1.5ex}{\tiny$(8.69)$}\end{tabular} & \begin{tabular}{@{}c@{}}5.36 \\ \raisebox{1.5ex}{\tiny$(8.72)$}\end{tabular} \\
    \midrule
    \multirow{8}{*}{$\kappa = 0.35$}
    & Train Exp Val & \begin{tabular}{@{}c@{}}0.04 \\ \raisebox{1.5ex}{\tiny$(0.04)$}\end{tabular} & \begin{tabular}{@{}c@{}}0.02 \\ \raisebox{1.5ex}{\tiny$(0.02)$}\end{tabular} & \begin{tabular}{@{}c@{}}0.15 \\ \raisebox{1.5ex}{\tiny$(0.14)$}\end{tabular} & \begin{tabular}{@{}c@{}}0.21 \\ \raisebox{1.5ex}{\tiny$(0.21)$}\end{tabular} & \begin{tabular}{@{}c@{}}0.17 \\ \raisebox{1.5ex}{\tiny$(0.16)$}\end{tabular} & \begin{tabular}{@{}c@{}}0.30 \\ \raisebox{1.5ex}{\tiny$(0.26)$}\end{tabular} & \begin{tabular}{@{}c@{}}0.04 \\ \raisebox{1.5ex}{\tiny$(0.04)$}\end{tabular} & \begin{tabular}{@{}c@{}}0.14 \\ \raisebox{1.5ex}{\tiny$(0.13)$}\end{tabular} & \begin{tabular}{@{}c@{}}0.23 \\ \raisebox{1.5ex}{\tiny$(0.21)$}\end{tabular} & \begin{tabular}{@{}c@{}}0.26 \\ \raisebox{1.5ex}{\tiny$(0.23)$}\end{tabular} \\
    & Train MSD ($\kappa=1$) & \begin{tabular}{@{}c@{}}0.08 \\ \raisebox{1.5ex}{\tiny$(0.08)$}\end{tabular} & \begin{tabular}{@{}c@{}}0.05 \\ \raisebox{1.5ex}{\tiny$(0.04)$}\end{tabular} & \begin{tabular}{@{}c@{}}0.27 \\ \raisebox{1.5ex}{\tiny$(0.26)$}\end{tabular} & \begin{tabular}{@{}c@{}}0.38 \\ \raisebox{1.5ex}{\tiny$(0.38)$}\end{tabular} & \begin{tabular}{@{}c@{}}0.29 \\ \raisebox{1.5ex}{\tiny$(0.30)$}\end{tabular} & \begin{tabular}{@{}c@{}}0.51 \\ \raisebox{1.5ex}{\tiny$(0.46)$}\end{tabular} & \begin{tabular}{@{}c@{}}0.08 \\ \raisebox{1.5ex}{\tiny$(0.08)$}\end{tabular} & \begin{tabular}{@{}c@{}}0.25 \\ \raisebox{1.5ex}{\tiny$(0.24)$}\end{tabular} & \begin{tabular}{@{}c@{}}0.41 \\ \raisebox{1.5ex}{\tiny$(0.38)$}\end{tabular} & \begin{tabular}{@{}c@{}}0.45 \\ \raisebox{1.5ex}{\tiny$(0.41)$}\end{tabular} \\
    & Test Exp Val & \begin{tabular}{@{}c@{}}2.42 \\ \raisebox{1.5ex}{\tiny$(3.99)$}\end{tabular} & \begin{tabular}{@{}c@{}}2.00 \\ \raisebox{1.5ex}{\tiny$(3.20)$}\end{tabular} & \begin{tabular}{@{}c@{}}5.11 \\ \raisebox{1.5ex}{\tiny$(8.23)$}\end{tabular} & \begin{tabular}{@{}c@{}}4.06 \\ \raisebox{1.5ex}{\tiny$(6.31)$}\end{tabular} & \begin{tabular}{@{}c@{}}4.15 \\ \raisebox{1.5ex}{\tiny$(6.82)$}\end{tabular} & \begin{tabular}{@{}c@{}}4.45 \\ \raisebox{1.5ex}{\tiny$(6.80)$}\end{tabular} & \begin{tabular}{@{}c@{}}3.27 \\ \raisebox{1.5ex}{\tiny$(5.44)$}\end{tabular} & \begin{tabular}{@{}c@{}}3.24 \\ \raisebox{1.5ex}{\tiny$(5.10)$}\end{tabular} & \begin{tabular}{@{}c@{}}3.48 \\ \raisebox{1.5ex}{\tiny$(5.29)$}\end{tabular} & \begin{tabular}{@{}c@{}}3.41 \\ \raisebox{1.5ex}{\tiny$(5.28)$}\end{tabular} \\
    & Test MSD ($\kappa=1$) & \begin{tabular}{@{}c@{}}4.25 \\ \raisebox{1.5ex}{\tiny$(7.04)$}\end{tabular} & \begin{tabular}{@{}c@{}}3.65 \\ \raisebox{1.5ex}{\tiny$(5.86)$}\end{tabular} & \begin{tabular}{@{}c@{}}8.40 \\ \raisebox{1.5ex}{\tiny$(13.63)$}\end{tabular} & \begin{tabular}{@{}c@{}}6.58 \\ \raisebox{1.5ex}{\tiny$(10.35)$}\end{tabular} & \begin{tabular}{@{}c@{}}6.77 \\ \raisebox{1.5ex}{\tiny$(11.28)$}\end{tabular} & \begin{tabular}{@{}c@{}}6.95 \\ \raisebox{1.5ex}{\tiny$(10.79)$}\end{tabular} & \begin{tabular}{@{}c@{}}5.70 \\ \raisebox{1.5ex}{\tiny$(9.52)$}\end{tabular} & \begin{tabular}{@{}c@{}}5.48 \\ \raisebox{1.5ex}{\tiny$(8.72)$}\end{tabular} & \begin{tabular}{@{}c@{}}5.65 \\ \raisebox{1.5ex}{\tiny$(8.69)$}\end{tabular} & \begin{tabular}{@{}c@{}}5.55 \\ \raisebox{1.5ex}{\tiny$(8.72)$}\end{tabular} \\
    \midrule
    \multirow{8}{*}{$\kappa = 0.65$}
    & Train Exp Val & \begin{tabular}{@{}c@{}}0.04 \\ \raisebox{1.5ex}{\tiny$(0.04)$}\end{tabular} & \begin{tabular}{@{}c@{}}0.02 \\ \raisebox{1.5ex}{\tiny$(0.02)$}\end{tabular} & \begin{tabular}{@{}c@{}}0.14 \\ \raisebox{1.5ex}{\tiny$(0.14)$}\end{tabular} & \begin{tabular}{@{}c@{}}0.22 \\ \raisebox{1.5ex}{\tiny$(0.21)$}\end{tabular} & \begin{tabular}{@{}c@{}}0.16 \\ \raisebox{1.5ex}{\tiny$(0.16)$}\end{tabular} & \begin{tabular}{@{}c@{}}0.32 \\ \raisebox{1.5ex}{\tiny$(0.26)$}\end{tabular} & \begin{tabular}{@{}c@{}}0.04 \\ \raisebox{1.5ex}{\tiny$(0.04)$}\end{tabular} & \begin{tabular}{@{}c@{}}0.13 \\ \raisebox{1.5ex}{\tiny$(0.13)$}\end{tabular} & \begin{tabular}{@{}c@{}}0.24 \\ \raisebox{1.5ex}{\tiny$(0.21)$}\end{tabular} & \begin{tabular}{@{}c@{}}0.28 \\ \raisebox{1.5ex}{\tiny$(0.23)$}\end{tabular} \\
    & Train MSD ($\kappa=1$) & \begin{tabular}{@{}c@{}}0.08 \\ \raisebox{1.5ex}{\tiny$(0.08)$}\end{tabular} & \begin{tabular}{@{}c@{}}0.04 \\ \raisebox{1.5ex}{\tiny$(0.04)$}\end{tabular} & \begin{tabular}{@{}c@{}}0.26 \\ \raisebox{1.5ex}{\tiny$(0.26)$}\end{tabular} & \begin{tabular}{@{}c@{}}0.37 \\ \raisebox{1.5ex}{\tiny$(0.38)$}\end{tabular} & \begin{tabular}{@{}c@{}}0.29 \\ \raisebox{1.5ex}{\tiny$(0.30)$}\end{tabular} & \begin{tabular}{@{}c@{}}0.51 \\ \raisebox{1.5ex}{\tiny$(0.46)$}\end{tabular} & \begin{tabular}{@{}c@{}}0.08 \\ \raisebox{1.5ex}{\tiny$(0.08)$}\end{tabular} & \begin{tabular}{@{}c@{}}0.24 \\ \raisebox{1.5ex}{\tiny$(0.24)$}\end{tabular} & \begin{tabular}{@{}c@{}}0.40 \\ \raisebox{1.5ex}{\tiny$(0.38)$}\end{tabular} & \begin{tabular}{@{}c@{}}0.45 \\ \raisebox{1.5ex}{\tiny$(0.41)$}\end{tabular} \\
    & Test Exp Val & \begin{tabular}{@{}c@{}}2.49 \\ \raisebox{1.5ex}{\tiny$(3.99)$}\end{tabular} & \begin{tabular}{@{}c@{}}2.06 \\ \raisebox{1.5ex}{\tiny$(3.20)$}\end{tabular} & \begin{tabular}{@{}c@{}}5.26 \\ \raisebox{1.5ex}{\tiny$(8.23)$}\end{tabular} & \begin{tabular}{@{}c@{}}4.17 \\ \raisebox{1.5ex}{\tiny$(6.31)$}\end{tabular} & \begin{tabular}{@{}c@{}}4.33 \\ \raisebox{1.5ex}{\tiny$(6.82)$}\end{tabular} & \begin{tabular}{@{}c@{}}4.57 \\ \raisebox{1.5ex}{\tiny$(6.80)$}\end{tabular} & \begin{tabular}{@{}c@{}}3.40 \\ \raisebox{1.5ex}{\tiny$(5.44)$}\end{tabular} & \begin{tabular}{@{}c@{}}3.35 \\ \raisebox{1.5ex}{\tiny$(5.10)$}\end{tabular} & \begin{tabular}{@{}c@{}}3.57 \\ \raisebox{1.5ex}{\tiny$(5.29)$}\end{tabular} & \begin{tabular}{@{}c@{}}3.52 \\ \raisebox{1.5ex}{\tiny$(5.28)$}\end{tabular} \\
    & Test MSD ($\kappa=1$) & \begin{tabular}{@{}c@{}}4.39 \\ \raisebox{1.5ex}{\tiny$(7.04)$}\end{tabular} & \begin{tabular}{@{}c@{}}3.77 \\ \raisebox{1.5ex}{\tiny$(5.86)$}\end{tabular} & \begin{tabular}{@{}c@{}}8.66 \\ \raisebox{1.5ex}{\tiny$(13.63)$}\end{tabular} & \begin{tabular}{@{}c@{}}6.76 \\ \raisebox{1.5ex}{\tiny$(10.35)$}\end{tabular} & \begin{tabular}{@{}c@{}}7.07 \\ \raisebox{1.5ex}{\tiny$(11.28)$}\end{tabular} & \begin{tabular}{@{}c@{}}7.14 \\ \raisebox{1.5ex}{\tiny$(10.79)$}\end{tabular} & \begin{tabular}{@{}c@{}}5.93 \\ \raisebox{1.5ex}{\tiny$(9.52)$}\end{tabular} & \begin{tabular}{@{}c@{}}5.68 \\ \raisebox{1.5ex}{\tiny$(8.72)$}\end{tabular} & \begin{tabular}{@{}c@{}}5.79 \\ \raisebox{1.5ex}{\tiny$(8.69)$}\end{tabular} & \begin{tabular}{@{}c@{}}5.73 \\ \raisebox{1.5ex}{\tiny$(8.72)$}\end{tabular} \\
    \midrule
    \multirow{8}{*}{$\kappa = 0.95$}
    & Train Exp Val & \begin{tabular}{@{}c@{}}0.04 \\ \raisebox{1.5ex}{\tiny$(0.04)$}\end{tabular} & \begin{tabular}{@{}c@{}}0.02 \\ \raisebox{1.5ex}{\tiny$(0.02)$}\end{tabular} & \begin{tabular}{@{}c@{}}0.14 \\ \raisebox{1.5ex}{\tiny$(0.14)$}\end{tabular} & \begin{tabular}{@{}c@{}}0.22 \\ \raisebox{1.5ex}{\tiny$(0.21)$}\end{tabular} & \begin{tabular}{@{}c@{}}0.16 \\ \raisebox{1.5ex}{\tiny$(0.16)$}\end{tabular} & \begin{tabular}{@{}c@{}}0.33 \\ \raisebox{1.5ex}{\tiny$(0.26)$}\end{tabular} & \begin{tabular}{@{}c@{}}0.04 \\ \raisebox{1.5ex}{\tiny$(0.04)$}\end{tabular} & \begin{tabular}{@{}c@{}}0.13 \\ \raisebox{1.5ex}{\tiny$(0.13)$}\end{tabular} & \begin{tabular}{@{}c@{}}0.24 \\ \raisebox{1.5ex}{\tiny$(0.21)$}\end{tabular} & \begin{tabular}{@{}c@{}}0.29 \\ \raisebox{1.5ex}{\tiny$(0.23)$}\end{tabular} \\
    & Train MSD ($\kappa=1$) & \begin{tabular}{@{}c@{}}0.07 \\ \raisebox{1.5ex}{\tiny$(0.08)$}\end{tabular} & \begin{tabular}{@{}c@{}}0.04 \\ \raisebox{1.5ex}{\tiny$(0.04)$}\end{tabular} & \begin{tabular}{@{}c@{}}0.25 \\ \raisebox{1.5ex}{\tiny$(0.26)$}\end{tabular} & \begin{tabular}{@{}c@{}}0.37 \\ \raisebox{1.5ex}{\tiny$(0.38)$}\end{tabular} & \begin{tabular}{@{}c@{}}0.28 \\ \raisebox{1.5ex}{\tiny$(0.30)$}\end{tabular} & \begin{tabular}{@{}c@{}}0.52 \\ \raisebox{1.5ex}{\tiny$(0.46)$}\end{tabular} & \begin{tabular}{@{}c@{}}0.07 \\ \raisebox{1.5ex}{\tiny$(0.08)$}\end{tabular} & \begin{tabular}{@{}c@{}}0.23 \\ \raisebox{1.5ex}{\tiny$(0.24)$}\end{tabular} & \begin{tabular}{@{}c@{}}0.40 \\ \raisebox{1.5ex}{\tiny$(0.38)$}\end{tabular} & \begin{tabular}{@{}c@{}}0.46 \\ \raisebox{1.5ex}{\tiny$(0.41)$}\end{tabular} \\
    & Test Exp Val & \begin{tabular}{@{}c@{}}2.56 \\ \raisebox{1.5ex}{\tiny$(3.99)$}\end{tabular} & \begin{tabular}{@{}c@{}}2.12 \\ \raisebox{1.5ex}{\tiny$(3.20)$}\end{tabular} & \begin{tabular}{@{}c@{}}5.40 \\ \raisebox{1.5ex}{\tiny$(8.23)$}\end{tabular} & \begin{tabular}{@{}c@{}}4.25 \\ \raisebox{1.5ex}{\tiny$(6.31)$}\end{tabular} & \begin{tabular}{@{}c@{}}4.44 \\ \raisebox{1.5ex}{\tiny$(6.82)$}\end{tabular} & \begin{tabular}{@{}c@{}}4.67 \\ \raisebox{1.5ex}{\tiny$(6.80)$}\end{tabular} & \begin{tabular}{@{}c@{}}3.50 \\ \raisebox{1.5ex}{\tiny$(5.44)$}\end{tabular} & \begin{tabular}{@{}c@{}}3.43 \\ \raisebox{1.5ex}{\tiny$(5.10)$}\end{tabular} & \begin{tabular}{@{}c@{}}3.65 \\ \raisebox{1.5ex}{\tiny$(5.29)$}\end{tabular} & \begin{tabular}{@{}c@{}}3.59 \\ \raisebox{1.5ex}{\tiny$(5.28)$}\end{tabular} \\
    & Test MSD ($\kappa=1$) & \begin{tabular}{@{}c@{}}4.51 \\ \raisebox{1.5ex}{\tiny$(7.04)$}\end{tabular} & \begin{tabular}{@{}c@{}}3.87 \\ \raisebox{1.5ex}{\tiny$(5.86)$}\end{tabular} & \begin{tabular}{@{}c@{}}8.89 \\ \raisebox{1.5ex}{\tiny$(13.63)$}\end{tabular} & \begin{tabular}{@{}c@{}}6.89 \\ \raisebox{1.5ex}{\tiny$(10.35)$}\end{tabular} & \begin{tabular}{@{}c@{}}7.26 \\ \raisebox{1.5ex}{\tiny$(11.28)$}\end{tabular} & \begin{tabular}{@{}c@{}}7.29 \\ \raisebox{1.5ex}{\tiny$(10.79)$}\end{tabular} & \begin{tabular}{@{}c@{}}6.11 \\ \raisebox{1.5ex}{\tiny$(9.52)$}\end{tabular} & \begin{tabular}{@{}c@{}}5.82 \\ \raisebox{1.5ex}{\tiny$(8.72)$}\end{tabular} & \begin{tabular}{@{}c@{}}5.92 \\ \raisebox{1.5ex}{\tiny$(8.69)$}\end{tabular} & \begin{tabular}{@{}c@{}}5.85 \\ \raisebox{1.5ex}{\tiny$(8.72)$}\end{tabular} \\
    \bottomrule
  \end{tabular}
  \caption{Risk table of the experiments with 10 classes of 150 samples each with 90\% features removed. Numbers in parentheses are from the risk-neutral baseline; numbers outside are from the risk-averse method.}
  \label{tab:risk-10small-remove}
\end{table}

Further, we conduct experiments with all 10 classes of the MNIST dataset while keeping the other settings unchanged. We obtain results in Fig.\ref{fig:f1-10small-remove}. The difference between Fig.\ref{fig:f1-10small-remove} and Fig.\ref{fig:f1-6small-remove} is clear. We can see that with more classes involved, the risk-averse methods shows significantly better F1 score than the baseline model. This observation and the previous analysis  suggest that with more classes, the system becomes more uncertain and more risky, which the risk-averse method is better suited to handle. Another noteworthy observation is that a larger value for the risk level $\kappa$ might be more appropriate in such a scenario. In Fig.\ref{fig:f1-10small-remove}, we can clearly see that the risk levels $\kappa=0.05, 0.35, 0.65$ have very similar performance, whereas the $\kappa=0.95$ one seems to be slightly worse than the rest. 

In Table.\ref{tab:risk-10small-remove}, the risk values with the 10-class experiment are reported. We can clearly see that the numbers here are much larger on average. However, the test risk has been greatly reduced for every class by the risk-averse model. It further shows that coherent risk measures are very useful when dealing with high-risk environments.

\begin{figure}[h]
\centering
  \begin{subfigure}{0.7\textwidth}
    \centering
    \includegraphics[width=\linewidth]{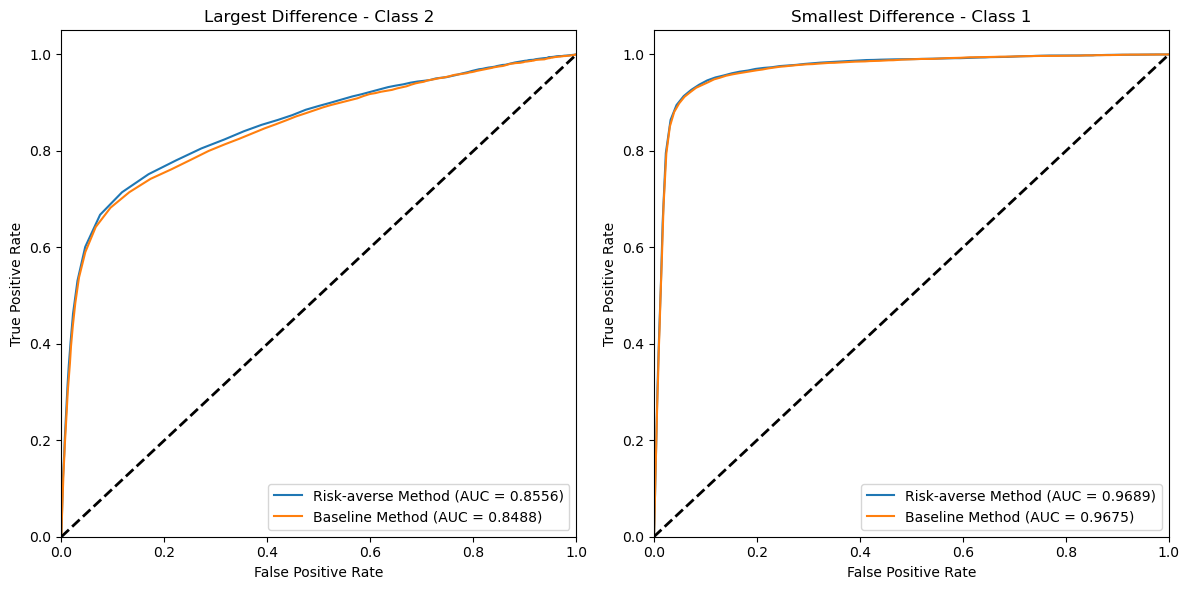}
    \caption{$\kappa=0.95$}
    \label{fig:auc-sub1}
  \end{subfigure}\\ 
  \begin{subfigure}{0.7\textwidth}
    \centering
    \includegraphics[width=\linewidth]{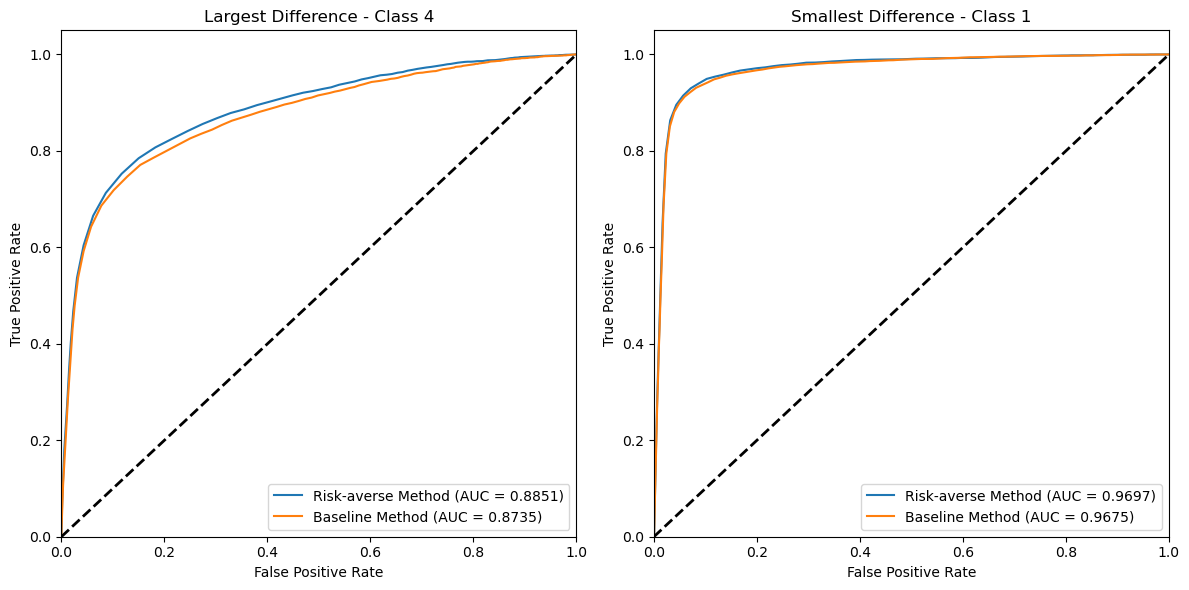}
    \caption{$\kappa=0.35$}
    \label{fig:auc-sub2}
  \end{subfigure}
\caption{For the 10-class experiment, we show the results with the largest and the smallest difference between the AUC scores.}
\label{fig:auc}
\end{figure}

For a more thorough comparison, we have also calculated the average ROC curve and AUC of each class over all the attempts. In the calculation of the function, one has to calculate the true positive rate (TPR) and the false positive rate (FPR) corresponding to different decision thresholds. In the Crammer-Singer method, the threshold $t$ is involved in the inequality
\[
\psi_i(x_i)-\max\big(\psi_j(x_i)\,|\,j\neq i\big)>t.
\]
In the general case, when deciding if an observation belongs to class $i$, the threshold is set to 0, which means that if an observation has the highest decision score with the classifier of the $i$th class, then it belongs to the $i$th class. In our experiment, we vary this threshold from its minimum possible value to its maximum, and for every class, we calculate its average TPR and NPR over all the attempts. Then we plot the ROC curve and report the AUC. Similar to any other metrics we use, all the calculations for both our method and the risk-neutral method are performed on the same training and test set in each attempt. 

Fig.\ref{fig:auc} shows the ROC and AUC of the best and the worst class based on the difference in the AUC scores. For the same value of $\kappa$, we consider the class with the largest difference between the AUC scores of the risk-averse method and the score of the risk-neutral method among the differences of all 10 classes. Likewise, we identify the class, for which this difference is the smallest. When the risk parameter $\kappa$ has too high value ($\kappa=0.95$), the difference between the two methods is very small, almost invisible. When we decrease its value to $\kappa=0.35$, the risk-averse method begins to perform better compared to the risk-neutral method, showing a larger difference in AUC values and a more significant gap between the ROC curves. Moreover, we see that no matter the risk level, even in the worst case, our method has a higher AUC score, meaning that for all 10 classes, our method designs a classifier that is better in terms of AUC.

\subsection{Non-linear Scalarization vs. Linear Scalarization}

We conducted tests to evaluate the difference between the nonlinear risk aggregation and the linear risk aggregation model. We repeat the experiments described in Section~\ref{sub:mislabeled_data} using the same protocol and compare the performance of the two aggregation methods. 

\begin{figure}[h!] 
\centering 
\includegraphics[width=0.8\linewidth]{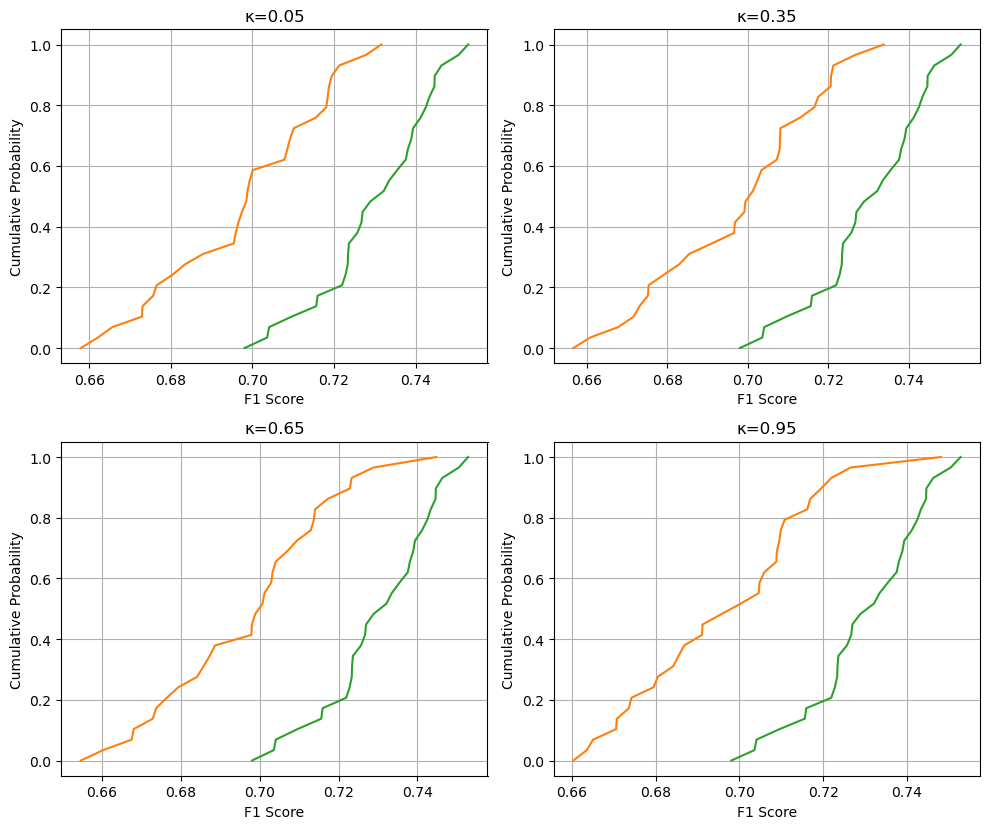} 
\caption{CDF of the average F1 scores of the non-linear aggregation model (green) and the linear aggregation model (orange) with different risk levels $\kappa$ using mislabeled data.} 
\label{fig:non_linear} 
\end{figure}

Fig.~\ref{fig:non_linear} reports results from 30 runs using the setup in Section~\ref{sub:mislabeled_data}. The outer risk measure is first-order MSD with $\kappa=0.05$, and the inner risk measure is first-order MSD with $\kappa=0.15$. The advantage of the nonlinear aggregation is clear from the graphs. Numerically, its average F1 score is $0.7297$, whereas the best of the four linear aggregation models achieves $0.6975$ score. Since in each of the 30 runs, we use the same split of the dataset for all models, a paired comparison shows that the nonlinear model outperforms the linear aggregation model in every run; the empirical probability of superiority is $1.00$. Moreover, we can see that in the graph, the CDF of the nonlinear model is less spread. Numerically, the nonlinear model's F1 score has a standard deviation of $0.0143$, while the best case among the linear ones is $0.0202$. We observe the same pattern under alternative experiment setups.This shows that not only does nonlinear aggregation gives better performance against risky situations, it is also provides greater stability.

\subsection{Risk-averse Kernel-based Method}

In this section, we report the experiments using the Electrical Fault detection and classification data from \cite{ElectricalFaultDetection2024} for testing the risk-averse kernel method. We first applied both risk-neutral Cramer-Singer's method and the traditional binary SVM method in a One-vs-All manner on the data. The result shows that the data is not linearly separable. Within 100 attempts, both methods only provide a similar average F1 score of around 0.55. We have solved the risk-averse dual kernel problem \eqref{p:dual-onestage} reformulating it for the mean-semideviation in the second stage and the total risk expectation in the first stage. We used the most popular kernels. The Guassian kernel (or RBF) and Laplacian kernel are more appropriate for this data set compared to other kernels such as polynomial kernel and cosine kernel. The Gaussian kernel is defined as 
$K(x,y) = \exp (-\gamma \| x-y\|_{2}^2)$ with $\gamma = \frac{1}{2\sigma^2}$, where $\sigma>0$ is the standard deviation. The Laplacian kernel is given by $K(x,y)=\exp (-\gamma \|x-y\|_{1})$. 

In Fig.\ref{fig:kernel-result}, we report the CDF of the average F1-scores in 100 attempts using the two kernels with different parameter values. As we can see, both kernels with appropriate parameter values, can produce a remarkable classification result. Compared to the F1 0.55 score from the non-kernel method. This shows that our new formulation using systemic risk measures is suitable also for kernel-based methods. Furthermore, the figures indicate that the F1-score of the risk-averse method is stochastically larger (with respect to the first-order stochastic dominance) than the baseline model. 

\begin{figure}[h]
\centering
  \begin{subfigure}{0.49\textwidth}
    \centering
    \includegraphics[width=\linewidth]{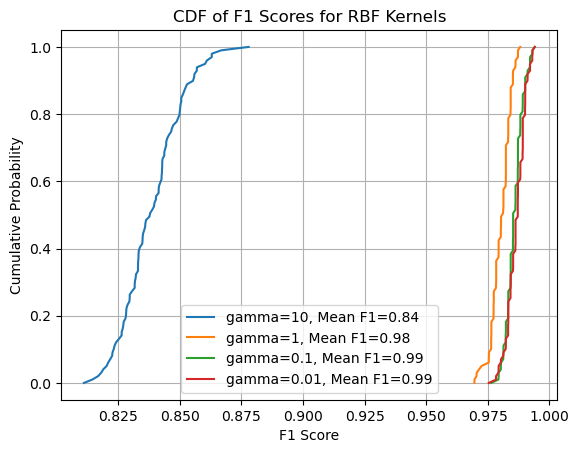}
    \label{fig:kernel-sub1}
  \end{subfigure} 
  \begin{subfigure}{0.49\textwidth}
    \centering
    \includegraphics[width=\linewidth]{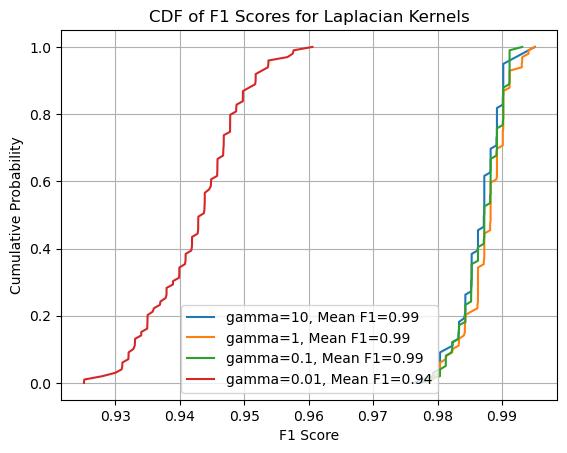}
    \label{fig:kernel-sub2}
  \end{subfigure}
\caption{Results with the Gausian (RBF) and the Laplacian kernels.}
\label{fig:kernel-result}
\end{figure}

We also conducted numerical experiments where the kernel version of the Cramer–Singer method is used in order to compare it with the proposed risk-averse classification design.  We set the risk parameter $\kappa=0.3$. The parameters for each method are optimized separately. The experiment setup is the same as in section~\ref{sub:mislabeled_data} but with the dataset we use in this section. The results are displayed in Fig.~\ref{fig:kernel-comparison}; the cumulative distribution functions are based on 100 runs.

\begin{figure}[h]
\centering
 \includegraphics[width=0.6\textwidth]{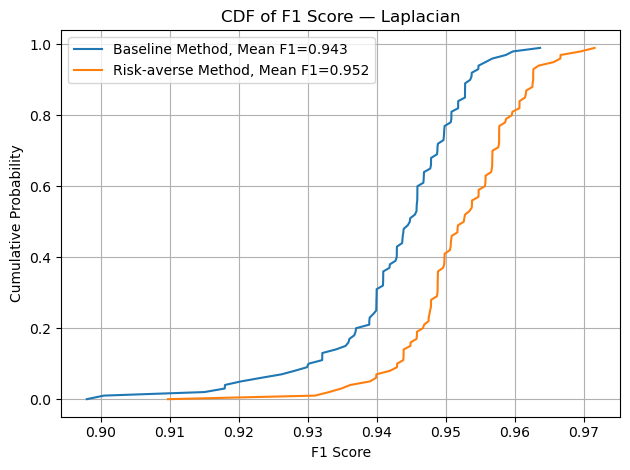}
\caption{Results with the Laplacian kernel for the ris-neutral and risk-averse classifiers.}
\label{fig:kernel-comparison}
\end{figure}

\subsection{Fair Risk-averse Classification}
\label{sub:fair_risk_averse_classification}

We have conducted experiments on data regarding equal opportunity fairness using the two-stage formulation. The data we utilized is the Drug Consumption Dataset \cite{Dua2019} from the UCI Machine Learning Repository. The Drug Consumption (quantified) dataset consists of data on the consumption of 18 different drugs among 1,885 participants. This dataset includes demographic information such as age, gender, education level, and country of residence.  Additionally, it contains scores on various personality traits, including neuroticism, extroversion, and openness to experience, measured using the NEO-FFI-R questionnaire. This is a 60-item questionnaire that measures the five major personality traits: neuroticism, extroversion, openness, agreeableness, and conscientiousness . 
This dataset contains usage of many different drugs, and for each drug, the consumption is represented by a categorical variable, indicating the frequency of use, starting from `Never used', `Used over a decade ago', to 'Used in the last day'. In order to test our SVM model and compare it with other baseline methods, we need to make the target binary. We slightly change the objective to classifying the drug consumption into 'Used in the last year' and 'Not used in the last year'. 

In the following experiment, we will use the two metrics: EO-difference and EO-ratio, to evaluate and compare the fairness between the different models.
Apart from fairness, we use F1-score to evaluate the performance of the models. For the two-stage model with 4 classes, we calculate the F1-score of the two original classes (`Used' and `NotUsed') because correctly classifying gender is not our goal.

In our experiments, we compare our classification method with nonlinear aggregation of contextual risk (\textbf{CNACR}) with \textbf{HDRFC}~\cite{wang2024wasserstein}, a Wasserstein distributionally robust classifier with fairness constraints, and with a binary soft-margin \textbf{SVM}. The baseline HDRFC also incorporates robustness and fairness together just like CNACR, making it a natural baseline for our experiment. For each run, we randomly sample $30\%$ of the dataset as a clean test set. From the remaining $70\%$, we randomly flip the binary gender for $20\%$ of examples, and then split this corrupted remainder into training and validation with a $4{:}1$ ratio. For each method, we perform a hyperparameter grid on the training set and select models on the validation set by the fairness target $\mathrm{EO\text{-}ratio} \ge 0.90$; among feasible configurations we choose the one with the highest F1 (falling back to the largest EO-ratio if none meet the target). We repeat this procedure for $20$ independent runs and report mean $\pm$ standard deviation.

\begin{table}[ht]
\centering
\setlength{\tabcolsep}{8pt}
\begin{tabular}{lcc}
\toprule
\textbf{Method} & \textbf{F1-score} & \textbf{EO-ratio} \\
\midrule
CNACR (ours) & $0.8099 \pm 0.0164$ & $0.8959 \pm 0.0605$ \\
HDRFC        & $0.7934 \pm 0.0206$ & $0.8692 \pm 0.0596$ \\
SVM          & $0.8151 \pm 0.0134$ & $0.7884 \pm 0.0504$ \\
\bottomrule
\end{tabular}
\caption{Test performance (mean $\pm$ std over $20$ runs).}
\label{tab:fair}
\end{table}

We see in Table.~\ref{tab:fair}, CNACR attains the best EO-ratio while keeping F1 close to SVM, improving fairness substantially over plain SVM and exceeding HDRFC on both fairness and F1. 
Quantitatively, CNACR improves EO-ratio by $+10.75$ pp (percentage points) over SVM with a modest F1 decrease of $0.52$ pp, and outperforms HDRFC by $+2.67$ pp in EO-ratio and $+1.65$ pp in F1.
The small F1 gap relative to SVM is consistent with the typical fairness–performance trade-off reported in the literature (e.g., \cite{rychener2022metrizing}), whereas compared to HDRFC, CNACR achieves \emph{better} fairness with \emph{less} performance sacrifice.

\begin{figure}[ht]
\centering
\begin{subfigure}{0.95\textwidth}
  \centering
  \includegraphics[width=0.49\linewidth]{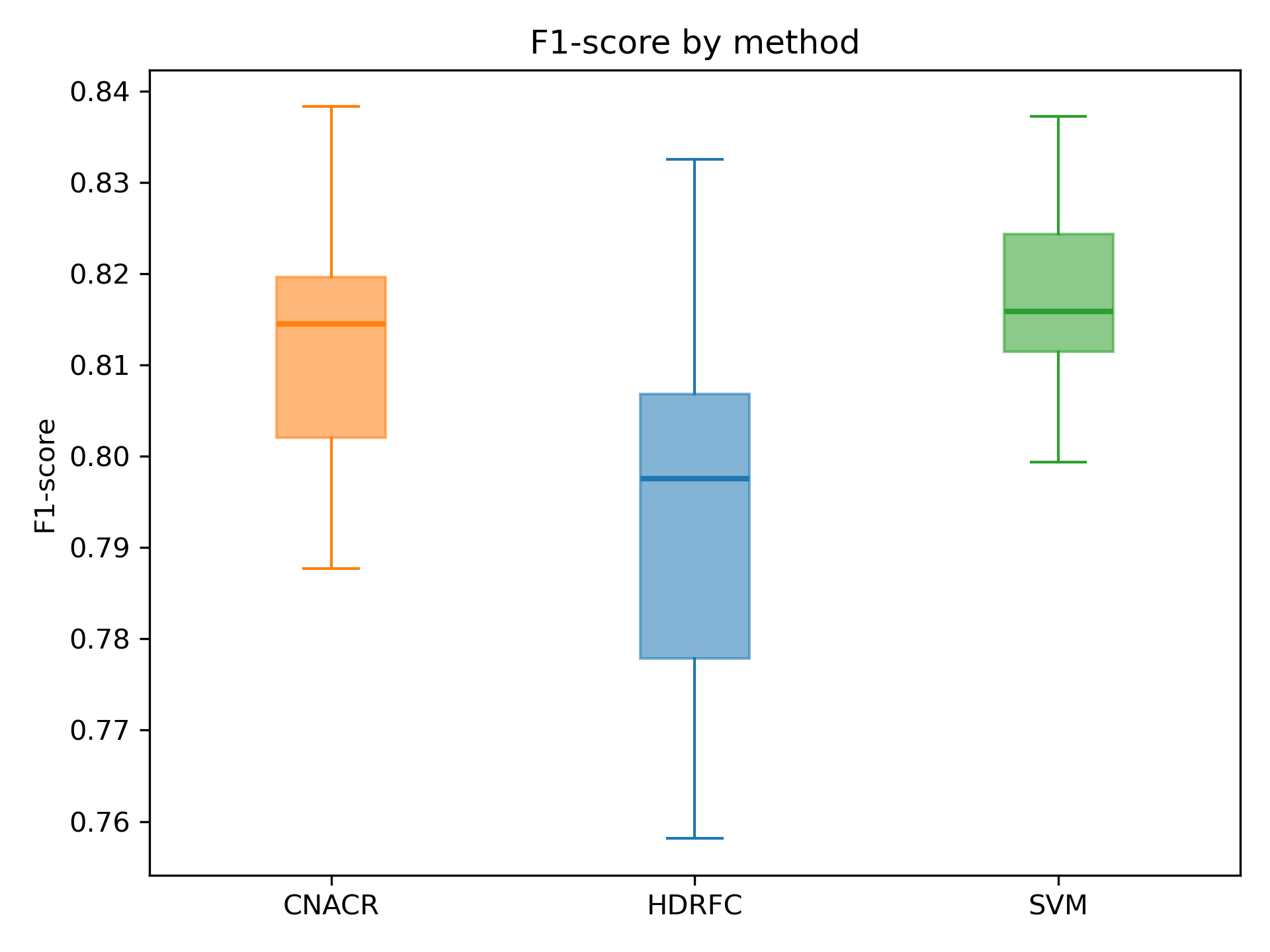}\hfill
  \includegraphics[width=0.49\linewidth]{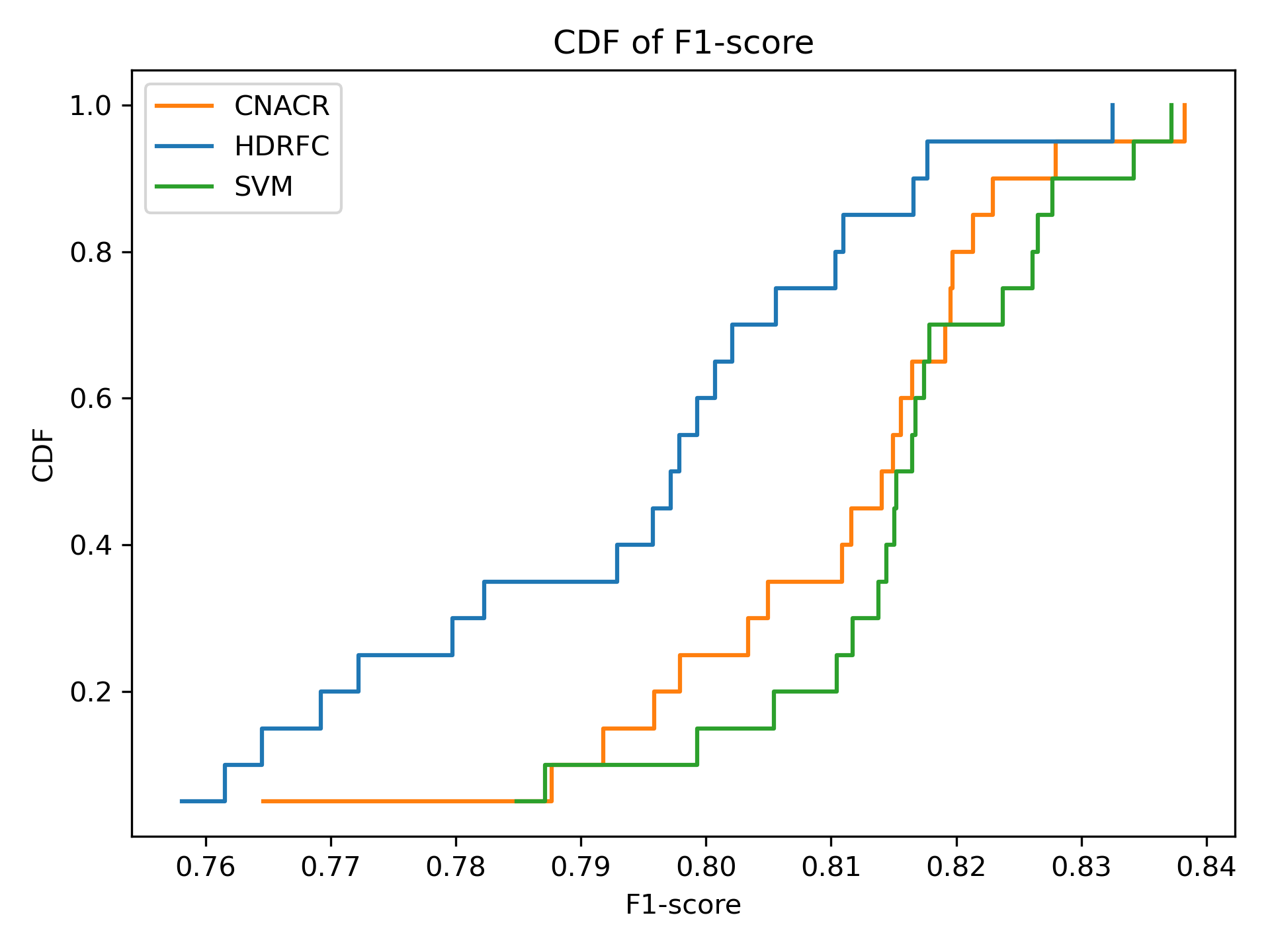}
  \caption{F1-scores}
  \label{fig:fair-f1}
\end{subfigure}

\begin{subfigure}{0.95\textwidth}
  \centering
  \includegraphics[width=0.49\linewidth]{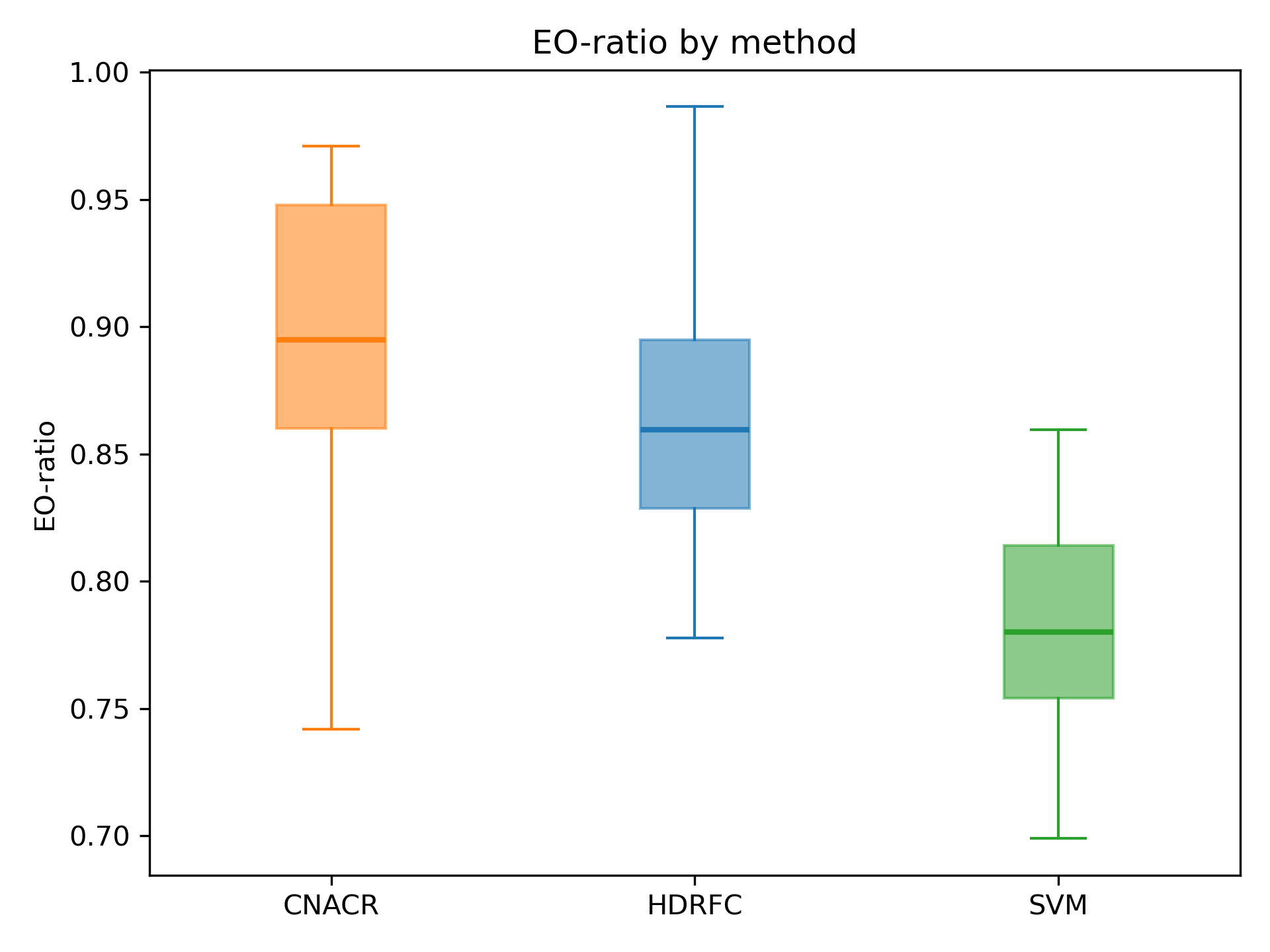}\hfill
  \includegraphics[width=0.49\linewidth]{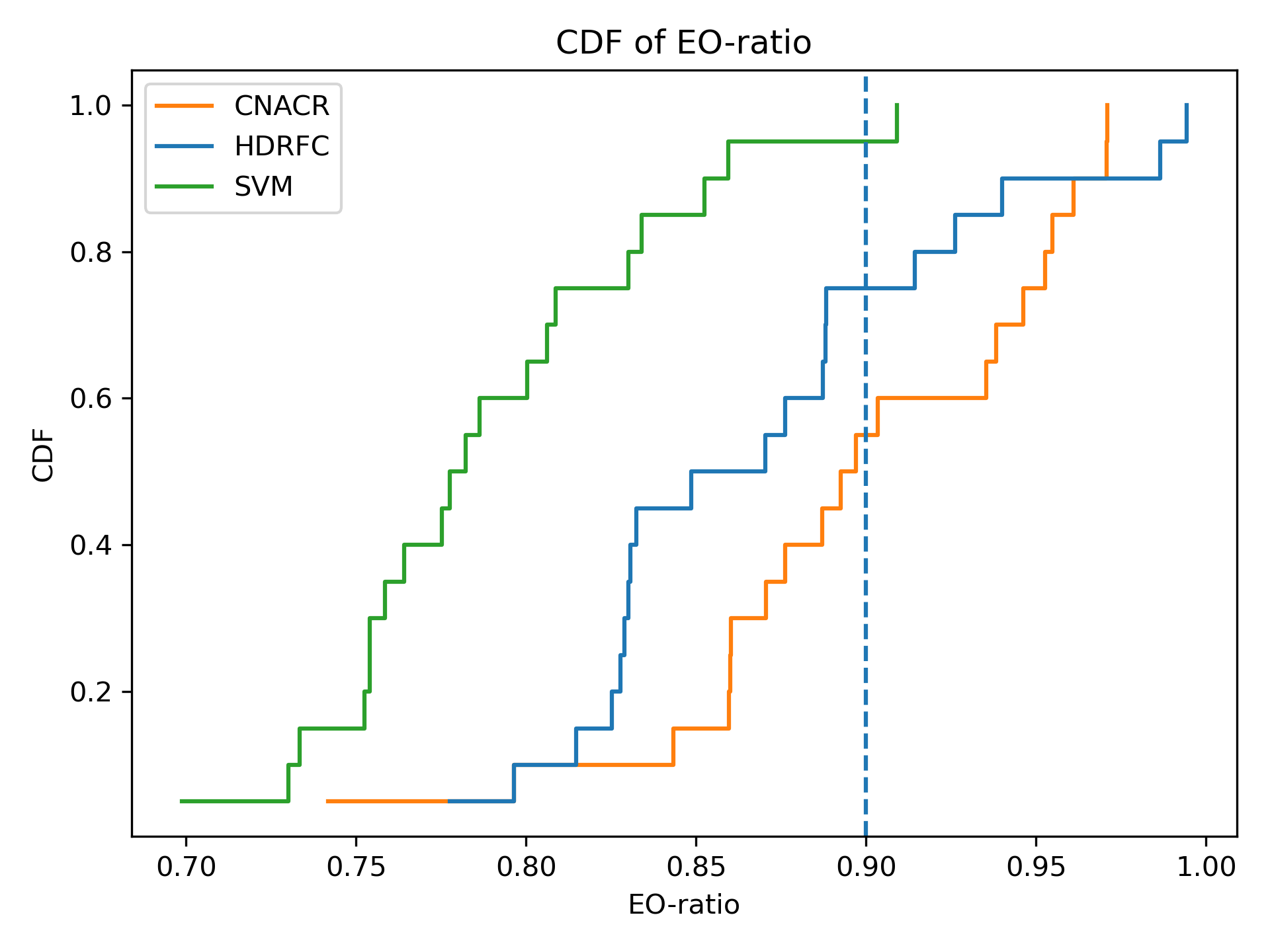}
  \caption{EO-ratios}
  \label{fig:fair-eo}
\end{subfigure}

\caption{Performance and fairness results.}
\label{fig:fair}
\end{figure}

Figure~\ref{fig:fair} shows a more direct comparison. From the box plot in Fig~\ref{fig:fair-f1}, we can see that CNACR has significantly higher average score compared to the baseline method HDRFC, with a much smaller variance, showing better generalization to the out-of-sample test set. The drawback of the performance compared to a plain SVM is much smaller than the precision drop by HDRFC. In the CDF plot, CNACR shows a first order stochastic dominance towards the HDRFC.
In Fig~\ref{fig:fair-eo}, CNACR again shows higher average in the box plot. Even though it also shows a higher variance, we can see in the CDF plot, the distribution of the CNACR's EO-ratio again dominates in the first order stochastic dominance the EO-ration of HDRFC. We notice that the confidence intervals of the fairness metrics are highly overlapping, so one may argue that this may not be enough to show a clear advantage. However, the design of this experiment mimics the real life usage of such fairness model, where we need our classifier prediction to be fair but also precise, i.e.,  we still need good performance of the model. The results show that our method can achieve the same or better level of fairness, while keeping a higher performance score compared to the baseline model.

\section{Conclusions}

In this paper, we have contributed to the application of risk-averse methods in classification by introducing a systemic point of view, which is particularly essential in multi-class scenarios. The involvement of a systemic measures of risk leads to a two-stage optimization problem in which the contextual risk of individual classes is calculated at the second stage and aggregated at the first stage where the optimal classifier is determined. We provide a tailored numerical method for solving the two-stage classification problem, which is highly efficient due to fact that the size of the optimization problems solved in the process does not increase with the size of the dataset. 

We have demonstrated that the risk-averse classification methods provide additional robustness to perturbation of the distributions, corrupted data, or small datasets; these methods generalize better to unknown data.  Furthermore, coherent systemic measures of risk allow us to enforce fairness without additional technical and computational burden. 
Empirical evidence shows that the improved performance over a risk-neutral counterpart becomes more pronounced when the number of classes increases. 

Additionally, we have extended the non-linear separation techniques based on kernels from the risk-neutral to the risk-averse case. We identify the dual problem of the two-stage optimization problem with systemic measure of risk and show that it has a structure of a two-stage problem as well.   Our numerical experiments confirm the efficiency and robustness of the kernel-based risk-averse method.

\vskip 0.2in

\end{document}